\newtheorem{theorem}{Theorem}
\newtheorem{lemma}[theorem]{Lemma}
\newtheorem{proposition}[theorem]{Proposition}
\newtheorem{corollary}[theorem]{Corollary}
\theoremstyle{definition}
\theoremstyle{remark}
\newtheorem{problem}{Problem}
\newcommand{\rtp}{\textsc{RTP}\xspace}
\newcommand{\rta}{\textsc{RTA}\xspace}
\newcommand{\mpp}{\textsc{MRPP}\xspace}
\newcommand{\ecbs}{\textsc{ECBS}\xspace}
\newcommand{\ddm}{\textsc{DDM}\xspace}
\newcommand{\rth}{\textsc{RTH}\xspace}
\newcommand{\rtm}{\textsc{RTLM}\xspace}
\newcommand{\rthlba}{\textsc{RTH-LBA}\xspace}
\newcommand{\rthip}{\textsc{RTH-IP}\xspace}
\def\gtl{\mathcal G_{tl}\xspace}
\def\gbr{\mathcal G_{br}\xspace}
\newif\ifdraft
\newif\ifarxiv
\newcommandx{\sh}[2][1=]{\todo[linecolor=blue,
			backgroundcolor=blue!10,bordercolor=blue,#1]{Han: #2}}
\newcommandx{\tg}[2][1=]{\todo[linecolor=orange,
			backgroundcolor=orange!10,bordercolor=orange,#1]{Greaten: #2}}
\newcommandx{\jy}[2][1=]{\todo[linecolor=green,
			backgroundcolor=green!10,bordercolor=green,#1]{JJ: #2}}
\newcommand{\sh}[1]{{}}
\newcommand{\tg}[1]{{}}
\newcommand{\jy}[1]{{}}
\newif\if@restonecol
\begin{document}

%\font\titlefont=ptmr at 16pt
% paper title

\title{{Sub-1.5 Time-Optimal Multi-Robot Path Planning on Grids in Polynomial Time}}

\author{Teng Guo \qquad Jingjin Yu
\thanks{
The authors are with the Department of Computer Science, 
Rutgers, the State University of New Jersey, Piscataway, NJ, 
USA. E-Mails: \{{\tt teng.guo, jingjin.yu}\}@rutgers.edu. 
%This work is supported in part by NSF awards IIS-1734492, 
%IIS-1845888, and CCF-1934924.
}%
%Opinions or findings expressed here do not reflect the views of the sponsor.
% <-this % stops a space
}

\maketitle

\begin{abstract}
It is well-known that graph-based multi-robot path planning (\mpp) is NP-hard to optimally solve. 
In this work, we propose the first low polynomial-time algorithm for  \mpp achieving 1--1.5 asymptotic optimality guarantees on solution makespan (i.e., the time it takes to complete a reconfiguration of the robots) for random instances under very high robot density, with high probability. The dual guarantee on computational efficiency and solution optimality suggests our proposed general method is promising in significantly scaling up multi-robot applications for logistics, e.g., at large robotic warehouses.

Specifically, on an $m_1\times m_2$ gird, $m_1 \ge m_2$, our RTH (Rubik Table with Highways) algorithm computes solutions for routing up to $\frac{m_1m_2}{3}$ robots with uniformly randomly distributed start and goal configurations with a makespan of $m_1 + 2m_2 + o(m_1)$, with high probability. Because the minimum makespan for such instances is $m_1 + m_2 - o(m_1)$, also with high probability, RTH guarantees $\frac{m_1+2m_2}{m_1+m_2}$ optimality as $m_1 \to \infty$ for random instances with up to $\frac{1}{3}$ robot density, with high probability. $\frac{m_1+2m_2}{m_1+m_2} \in (1, 1.5]$. 
Alongside this key result, we also establish a series of related results supporting even higher robot densities and environments with regularly distributed obstacles, which directly map to real-world parcel sorting scenarios.
Building on the baseline methods with provable guarantees, we have developed effective, principled heuristics that further improve the computed optimality of the \rth algorithms.
In extensive numerical evaluations, \rth and its variants demonstrate exceptional scalability as compared with methods including ECBS and DDM, scaling to over $450 \times 300$ grids with $45,000$ robots, and consistently achieves makespan around $1.5$ optimal or better, as predicted by our theoretical analysis. 
\end{abstract}

\IEEEpeerreviewmaketitle
\section{Introduction}
We examine multi-robot path planning (\mpp, also known as multi-agent path finding or MAPF \cite{stern2019multi}) on two-dimensional grids, with potentially regularly distributed obstacles (see Fig.~\ref{fig:jd_center}). 
The main objective of \mpp is to find a set of collision-free paths for routing many robots from a start configuration to a goal configuration.  
In practice, solution optimality is also of key importance; yet optimally solving \mpp is generally NP-hard~\cite{surynek2010optimization,yu2013structure}, even in planar \cite{yu2015intractability} and grid settings~\cite{demaine2019coordinated}. 
\mpp algorithms find many important large-scale applications, including, e.g., in warehouse automation for general order fulfillment \cite{wurman2008coordinating}, grocery order fulfillment \cite{mason2019developing}, and parcel sorting \cite{wan2018lifelong}. Other application scenarios include formation reconfiguration~\cite{PodSuk04}, agriculture~\cite{cheein2013agricultural}, object 
transportation~\cite{RusDonJen95}, swarm robotics \cite{preiss2017crazyswarm,honig2018trajectory}, to list a few.
 
 \begin{figure}[htbp]
        \centering
        \vspace{1mm}
        \begin{overpic}               
        [width=1\linewidth]{./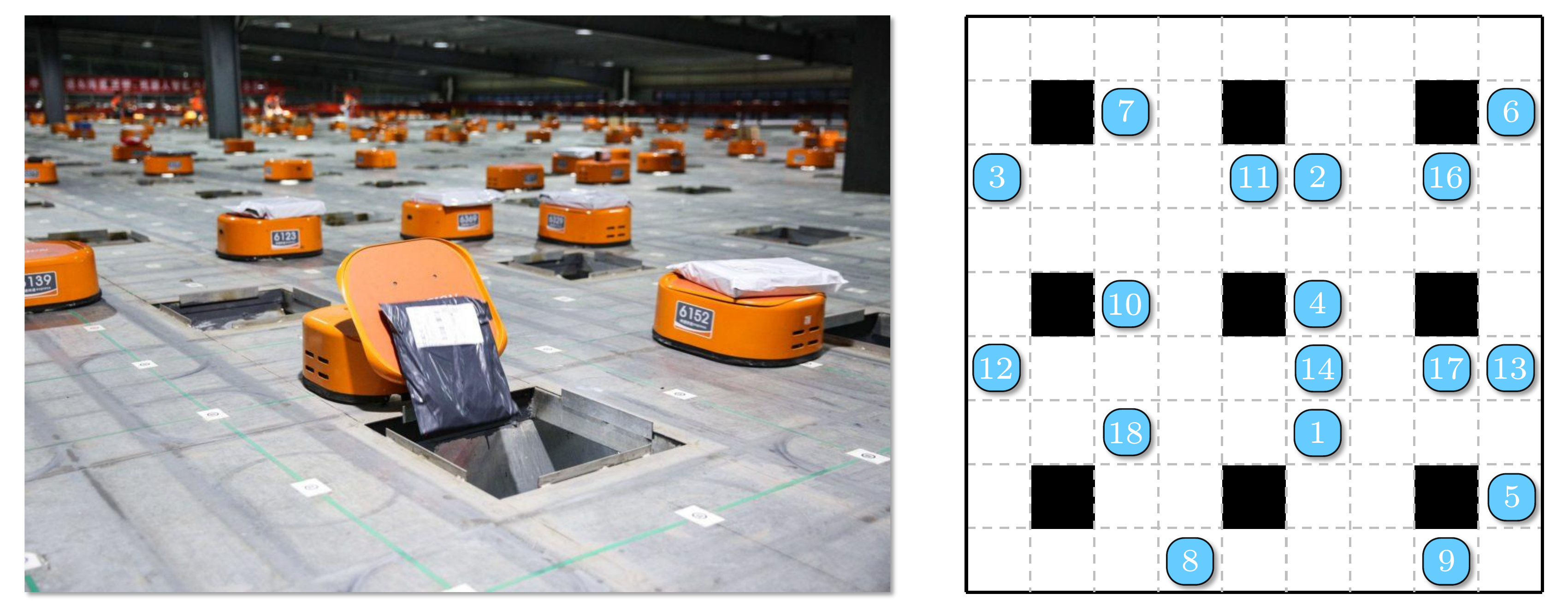}
             \footnotesize
             \put(28.5, -3) {(a)}
             \put(78.5, -3) {(b)}
        \end{overpic}
        \vspace{-2mm}
        \caption{ (a) Real-world parcel sorting system (by JD.com) using many robots on a large grid-like environment with holes for dropping parcels; (b) A snapshot of a similar \mpp instance we can solve in polynomial-time with provable optimality guarantees. In practice, our algorithms scale to maps of size {$450 \times 300$}, supporting over $50$K robots, and achieves $1.x$-optimality (see, e.g.,  Fig.~\ref{fig: sorting_random}). 
        } 
        \label{fig:jd_center}
        \vspace{-5mm}
    \end{figure}

Motivated by applications including grocery fulfillment and parcel sorting, we focus on \mpp in which the underlying graph is an $m_1\times m_2$ grid, $m_1\ge m_2$, with extremely high robot density. Whereas recent studies \cite{yu2018constant,demaine2019coordinated} have shown that such problems can be solved in polynomial time with $O(1)$ optimality guarantees, the constant factor associated with the guarantee is generally prohibitively high ($\gg 1$) for these methods to be practical.
In this research, we break this barrier by showing that, we can achieve $(1+\delta)$-makespan optimality for \mpp on large grids in polynomial-time in which $\delta \in (0, 0.5 + \varepsilon]$, $\varepsilon \to 0$ as $m_1 \to \infty$.
Through the judicious application of a novel global object rearrangement method called Rubik Tables \cite{szegedy2020rearrangement} together with many algorithmic techniques, and combined with careful analysis, we establish that in polynomial time:
\begin{itemize}
    \item For $m_1m_2$ robots, i.e., at maximum robot density, RTM (Rubik Table for \mpp) computes a solution for an arbitrary \mpp instance under a makespan of $7m_1 + 14m_2$;
    \item For $\frac{m_1m_2}{3}$ robots and uniformly randomly distributed start/goal configurations, RTH (Rubik Tables with Highways) computes a solution with a makespan of $m_1 + 2m_2 + o(m_1)$, with high probability. In contrast, such an instance has a minimum makespan of $m_1 + m_2 - o(m_1)$ with high probability. This implies that, as $m_1 \to \infty$, an optimality guarantee of $\frac{m_1 + 2m_2}{m_1+m_2} \in (1, 1.5]$ is achieved, with high probability; 
    \item For $\frac{m_1m_2}{3}$ robots, for an arbitrary (i.e., not necessarily random) instance, a solution can be computed with a makespan of $3m_1+4m_2+o(m_1)$ using RTH; 
    \item For $\frac{m_1m_2}{2}$ robots, the same $\frac{m_1 + 2m_2}{m_1+m_2}$ optimality guarantee can be achieved with a slightly larger overhead using \rtm (Rubik Tables with Line Merge); 
    \item The same $\frac{m_1 + 2m_2}{m_1+m_2}$ optimality guarantee may be achieved on grids with up to $\frac{m_1m_2}{9}$ regularly distributed obstacles together with $\frac{2m_1m_2}{9}$ robots using RTH (e.g., Fig.~\ref{fig:jd_center}(b)). 
\end{itemize}

Moreover, we have developed effective and principled heuristics to work together with \rth that further reduce the computed makespan by a large margin, i.e., for $\frac{m_1m_2}{3}$ robots, a makespan smaller than $m_1 + 2m_2$ can often be achieved. Demonstrated through extensive numerical evaluations, our methods are highly scalable, capable of solving instances with tens of thousands of robots in dense settings under two minutes. Simultaneously, the solution optimality approaches the 1--1.5 range as predicted theoretically. This level of scalability far exceeds what was possible. 
With the sub-1.5 optimality guarantee, our approach unveils a promising direction toward the development of practical, provably optimal multi-robot routing algorithms that runs in low polynomial time. 

\textbf{Related work.} 
Literature on multi-robot path and motion planning \cite{hopcroft1984complexity,ErdLoz86} is expansive; here, we mainly focus on graph-theoretic (i.e., the state space is discrete) studies \cite{yu2016optimal,stern2019multi}. As such, in this paper, \mpp refers explicitly to graph-based multi-robot path planning. 
Whereas the feasibility question has long been positively answered for \mpp  \cite{KorMilSpi84}, the same cannot be said when it comes to securing optimal solutions, as computing time- or distance-optimal solutions are shown to be NP-hard in many settings, including for general graphs \cite{goldreich2011finding,surynek2010optimization,yu2013structure}, planar graphs \cite{yu2015intractability,banfi2017intractability}, and even regular grids \cite{demaine2019coordinated}, similar to the setting addressed in this study.  

Nevertheless, given its high utility, especially in e-commerce applications \cite{wurman2008coordinating,mason2019developing,wan2018lifelong} that are expected to grow significantly \cite{dekhne2019automation,covid-auto}, many algorithmic solutions have been proposed for optimally solving \mpp. 
Among these, combinatorial-search based solvers \cite{lam2019branch} have been demonstrated to be fairly effective.  \mpp solvers may be classified as being optimal or suboptimal. 
Reduction-based optimal solvers solve the problem through reducing the \mpp problem to other problem, e.g., SAT~\cite{surynek2012towards}, answer set programming~\cite{erdem2013general}, integer linear programming (ILP)~\cite{yu2016optimal}.
Search-based optimal \mpp solvers include EPEA* \cite{goldenberg2014enhanced}, ICTS \cite{sharon2013increasing}, CBS \cite{sharon2015conflict}, M* \cite{wagner2015subdimensional}, and many others. 
Due to the inherent intractability of optimal \mpp, optimal solvers usually exhibit limited scalability, leading to considerable interests in suboptimal solvers.
Unbounded solvers like push-and-swap~\cite{luna2011push}, push-and-rotate~\cite{de2014push}, windowed hierarchical cooperative A${}^*$~\cite{silver2005cooperative}, all return feasible solutions very quickly, but at the cost of solution quality.
Balancing the running-time and optimality is one of the most attractive topics in the study of \mpp/MAPF. 
Some algorithms emphasize the scalability without sacrificing as much optimality, e.g., ECBS~\cite{barer2014suboptimal}, DDM \cite{han2020ddm}, EECBS \cite{li2021eecbs}, PIBT \cite{Okumura2019PriorityIW}, PBS \cite{ma2019searching}. There are also learning-based solvers ~\cite{damani2021primal,sartoretti2019primal} that scales well in sparse environments. Effective orthogonal heuristics have also been proposed \cite{9561899}. 
Recently, $O(1)-$approximate or constant factor time-optimal algorithms have been proposed, e.g.  \cite{yu2018constant,demaine2019coordinated}, that tackle highly dense instances. However, these algorithms only achieve low-polynomial time guarantee at the expense of very large constant factors, rendering them theoretically interesting but impractical. 

In contrast, with high probability, our methods run in low polynomial time with provable 1--1.5 asymptotic optimality. To our knowledge, this paper presents the first \mpp algorithms to simultaneously guarantee polynomial running time and $1.x$ solution optimality.

\textbf{Organization.} The rest of the paper is organized as follows.
In Sec.~\ref{sec:pre}, we provide a formal definition of graph-based \mpp, and introduce the Rubik Table problem and the associated algorithm.
RTM, a basic adaptation of the Rubik Table results for \mpp at maximum robot density which ensures a makespan upper bound of $7m_1 + 14m_2$, is described in Sec.~\ref{sec:1:1}. An accompanying lower bound of $m_1 + m_2 - o(m_1)$ for random \mpp instances is also established. 
In Sec.~\ref{sec:1:2} we introduce RTH for one third robot density achieving a makespan of $m_1 + 2m_2 + o(m_1)$. Obstacle support is also discussed. 
In Sec.~\ref{sec:1:3}, we show how one-half robot density may be supported with optimality guarantees similar to that of RTH. 
We thoroughly evaluate the performance of our methods in Sec.~\ref{sec:eval} and conclude with Sec.~\ref{sec:conclusion}.
Given the amount of material included in the work, to provide a concentrated discussion, we refer the readers to \cite{guo2022sub} for proofs to theorems. 

\section{Preliminaries}\label{sec:pre}
\subsection{Multi-Robot Path Planning on Graphs}
Graph-based multi-robot path planning (\mpp) seeks collision-free paths 
that efficiently route robots. 
Consider an undirected graph $\mathcal G(V, E)$ and $n$ robots 
with start configuration $S = \{s_1, \dots, s_n\} \subseteq V$ and goal configuration 
$G = \{g_1, \dots, g_n\} \subseteq V$. Each robot has start and goal vertices $s_i$, $g_i$.
We define a {\em path} for robot $i$ as a map 
$P_i: \mathbb {N} \to V$ where $\mathbb N$ is the set of non-negative integers. 
A feasible $P_i$ must be a sequence of vertices that connects $s_i$ and $g_i$: 
1) $P_i(0) = s_i$;
2) $\exists T_i \in \mathbb N$, s.t. $\forall t \geq T_i, P_i(t) = g_i$;
3) $\forall t > 0$, $P_i(t) = P_i(t - 1)$ or $(P_i(t), P_i(t - 1)) \in E$. 

With warehouse automation-like applications in mind, we work with $\mathcal G$ being $4$-connected grids, aiming to minimize the \emph{makespan}, i.e., $\max_i\{|{P}_i|\}$.
Unless stated otherwise, $\mathcal G$ is assumed to be an $m_1 \times m_2$ grid with $m_1 \ge m_2 $. Also, ``randomness'' in this paper always refers to uniform randomness. 
The version of \mpp we study is sometimes referred to as the \emph{one-shot} MAPF problem \cite{stern2019multi}. We mention that our results also translate to guarantees on the life-long setting \cite{stern2019multi}, which is briefly discussed in Sec.~\ref{sec:conclusion}. 

\subsection{The Rubik Table Problem (\rtp)}
The Rubik Table problem (\rtp)~\cite{szegedy2020rearrangement} formalizes the task of carrying out globally coordinated token swapping operations on lattices, with many interesting applications. The problem has many variations; in our study, we use the basic 2D form and the associated algorithms, which are summarized below.

\begin{problem}[{\normalfont \bf Rubik Table Problem (\rtp)} \cite{szegedy2020rearrangement}]
Let $M$ be an $m_1 (row) \times m_2 (column)$ table, $m_1 \ge m_2$, containing $m_1m_2$ items, one in each table cell. 
The $m_1m_2$ items are of $m_2$ colors with each color having a multiplicity of $m_1$.
In a \emph{shuffle} operation, the items in a single column or a single
row of $M$ may be permuted in an arbitrary manner. 
Given an arbitrary configuration $X_I$ of the items, find a sequence of shuffles that take $M$ from $X_I$ to the configuration where
row $i$, $1 \leq i \leq m_1$, contains only items of color $i$. The problem may also be \emph{labeled}, i.e., each item has a unique label in $1, \ldots, m_1m_2$.
\end{problem}

A key result from \cite{szegedy2020rearrangement}, which we denote as the Rubik Table Algorithm (\rta), establishes that a colored \rtp can be solved using $m_2$ column shuffles followed by $m_1$ row shuffles.
Additional $m_1$ row shuffles then solve the labeled \rtp.

\begin{theorem}[{\normalfont \bf Rubik Table Theorem \cite{szegedy2020rearrangement}}]\label{t:rta}
An arbitrary Rubik Table problem on an $m_1\times m_2$ table can be solved using $m_1 + m_2$ shuffles. The labeled Rubik Table problem can be solved using $2m_1 + m_2$ shuffles.
\end{theorem}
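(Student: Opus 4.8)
\emph{Proof proposal.} I would prove the colored bound first, by a two-pass ``balance, then sort'' scheme, and then obtain the labeled bound by appending a single further pass. Throughout, it is cleanest to view the $m_1m_2$ items as coming in $m_1$ colors of multiplicity $m_2$ each, with the target being ``row $i$ consists entirely of color $i$'' (the transposed conventions are handled at the end). The key observation is that once every \emph{column} is a rainbow -- i.e.\ contains each color exactly once -- a single shuffle of each column, sending its color-$i$ item to row $i$, solves the colored instance, using $m_2$ column shuffles. So the task reduces to showing that an arbitrary start configuration $X_I$ can be brought to a rainbow-columns configuration using only one shuffle per row, i.e.\ $m_1$ row shuffles; together these give the bound $m_1+m_2$.

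To establish the rainbow-columns reachability I would phrase it as an edge-coloring statement. Build the bipartite multigraph $H$ with one side the $m_1$ rows and the other side the $m_1$ colors, inserting one edge between row $r$ and color $c$ for each item of color $c$ currently lying in row $r$. Every row vertex then has degree $m_2$ (a row has $m_2$ cells) and every color vertex has degree $m_2$ (its multiplicity), so $H$ is $m_2$-regular. A regular bipartite multigraph is an edge-disjoint union of perfect matchings -- K\"onig's edge-coloring theorem, or, concretely, iterate Hall's marriage theorem, each step extracting a perfect matching and leaving a regular multigraph of degree one less. Index the $m_2$ resulting matchings by the $m_2$ columns. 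Reading matching $P_j$ as ``within each row $r$, move the item sitting on $P_j$'s edge at $r$ into column $j$'' prescribes, for each row, a bijection from its items to the columns, hence is carried out by one legal shuffle of that row; and because $P_j$ saturates the color side, column $j$ ends up holding exactly one item of each color, which is precisely the rainbow-columns configuration. Finally, after the colored solve each row is monochromatic, so any shuffle of a row keeps it monochromatic and only permutes its now-distinguishable labels within that row; performing the appropriate such shuffle on each row places every labeled item in its target column, adding $m_1$ shuffles and giving $2m_1+m_2$ for the labeled case.

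The main obstacle I anticipate is exactly the rainbow-columns claim; the remainder -- verifying that a perfect matching translates into a legal row shuffle, and tallying $m_1+m_2$ and then $2m_1+m_2$ -- is bookkeeping, whereas the content lies in recognizing the row--color bipartite multigraph, observing its $m_2$-regularity, and invoking a matching decomposition. A couple of minor points still deserve care: the two passes must act along complementary axes and in the order (row shuffles, then column shuffles) so that the colored solve finishes in exactly two passes and leaves rows monochromatic, which is what makes the final labeled pass structure-preserving and lands the count at $2m_1+m_2$; and one should note that transposing the roles of rows and columns yields the same $m_1+m_2$ bound, so the argument does not actually depend on which of $m_1,m_2$ is larger.
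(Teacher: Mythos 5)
Your argument is correct and is essentially the same as the construction the paper sketches (and attributes to \cite{szegedy2020rearrangement}): the $m_2$-regular row--color bipartite multigraph $B(T,R)$, its decomposition into $m_2$ perfect matchings that each prescribe a rainbow column realized by one shuffle per row, followed by $m_2$ column shuffles for the colored case and $m_1$ further row shuffles within the now-monochromatic rows for the labeled case. Your reading of the color conventions ($m_1$ colors of multiplicity $m_2$, target ``row $i$ is color $i$'') also matches the paper's Fig.~\ref{fig:rubik} rather than the (typo-afflicted) problem statement, so no gap there.
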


We briefly illustrate how \rta works on an $m_1 \times m_2$ table with $m_1 =4$ and $m_2 =3$ (in Fig.~\ref{fig:rubik}); we refer readers to \cite{szegedy2020rearrangement} for more details. \rta operates in two phases. In the first phase, a bipartite graph $B(T, R)$ is constructed based on the initial table configuration where the partite set $T$ are the colors/types of items, and the set $R$ are the rows of the table  (Fig.~\ref{fig:rubik}(b)). An edge is added to $B$ between $t \in T$ and $r \in R$ for every item of color $t$ in row $r$. 
From $B(T,R)$, a set of $m_2$ \emph{perfect matchings} can be computed, as guaranteed by \cite{hall2009representatives}. Each matching, containing $m_1$ edges, connects all of $T$ to all of $R$, and dictates how a column should look like after the first phase. For example, the first set of matching in solid lines in Fig.~\ref{fig:rubik}(b) says that the first column should be ordered as yellow, cyan, red, and green as shown in Fig.~\ref{fig:rubik}(c). 
After all matchings are processed, we get an intermediate table, Fig.~\ref{fig:rubik}(c). Notice that each row of Fig.~\ref{fig:rubik}(a) can be shuffled to yield the corresponding row of Fig.~\ref{fig:rubik}(c); this is the key novelty of the \rta.
After the first phase of $m_1$ row shuffles, the intermediate table (Fig.~\ref{fig:rubik}(c))  can then be rearranged with $m_2$ column shuffles to solve the colored \rtp (Fig.~\ref{fig:rubik}(d)). Another $m_1$ row shuffles can then solve the labeled \rtp (Fig.~\ref{fig:rubik}(e)). We note that it is also possible to perform labeled rearrangement using $m_2$ column shuffles followed by $m_1$ row shuffles and then followed by another $m_2$ column shuffles. 

\begin{figure}[h]
\vspace{-1mm}
        \centering
        \begin{overpic}[width=\linewidth]{./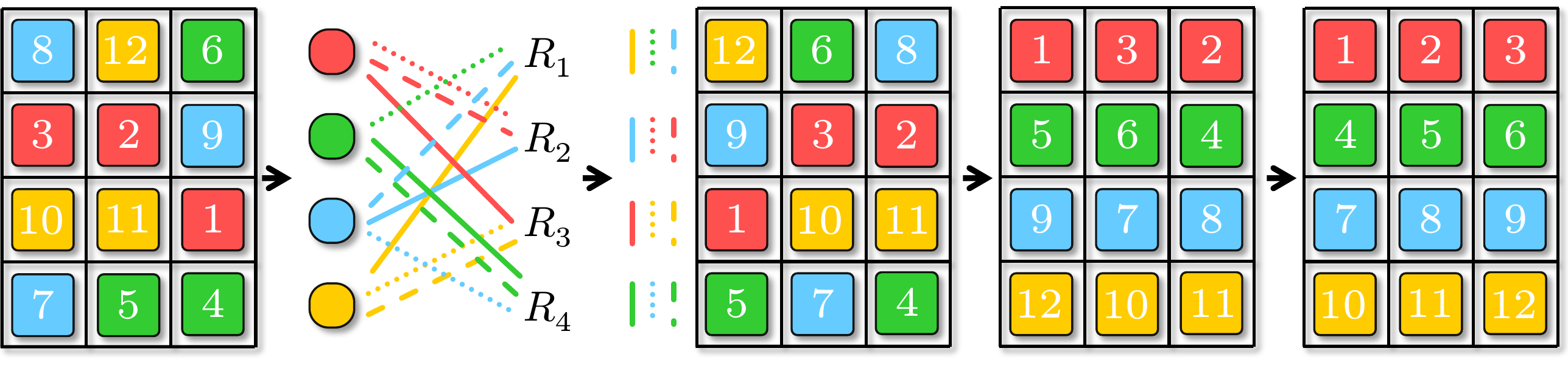}
             \footnotesize
             \put(6.5,  -3) {(a)}
             \put(26,  -3) {(b)}
             \put(50.5,  -3) {(c)}
             \put(70,  -3) {(d)}
             \put(89.5,  -3) {(e)}
        \end{overpic}
        
        \vspace{3.5mm}
        \caption{Illustration of applying the \emph{$11$ shuffles}. (a) The initial $4\times 3$ table with  a random arrangement of 12 items that are colored and labeled. The labels are consistent with the colors. (b) The constructed bipartite graph. It contains $3$ perfect matchings, determining the $3$ columns in (c); only color matters in this phase. (c) Applying $4$ row shuffles to (a), according to the matching results, leads to an intermediate table where each column has one color appearing exactly once. (d) Applying $3$ column shuffles to (c) solves a colored \rtp. (e) $4$ additional row shuffles fully sort the labeled items.} 
        \label{fig:rubik}
\vspace{-1mm}        
    \end{figure}

\rta runs in $O(m_1m_2\log m_1)$ (notice that this is nearly linear with respect to $n = m_1m_2$, the total number of items) expected time or $O(m_1^2m_2)$ deterministic time. 
If $m_1 = m_2 = m$, then the times become $O(m^2\log m)$ expected and $O(m^3)$ deterministic, respectively. 

\section{Solving \mpp up to Maximum Density w/ \rta}\label{sec:1:1}
The built-in global coordination capability of \rta naturally applies to solving makespan-optimal \mpp.
Since \rta only requires \emph{three rounds} of shuffles and each round involves either parallel row shuffles or parallel column shuffles, if each round of shuffles can be realized with makespan proportional to the size of the row/column, then a makespan upper bound of $O(m_1 + m_2)$ can be guaranteed. 
This is in fact achievable even when all of $\mathcal G$'s vertices are occupied by robots, by recursively applying a \emph{labeled line shuffle algorithm} \cite{yu2018constant}, which can arbitrarily rearrange a line of $m$ robots embedded in a grid using $O(m)$ makespan. 
\begin{lemma}[Basic Simulated Labeled Line Shuffle \cite{yu2018constant}]\label{l:line-shuffle} For $m$ labeled robots on a straight path of length $m$, embedded in a 2D grid, they may be arbitrarily ordered in $O(m)$ steps. Moreover, multiple such reconfigurations can be performed on parallel paths within the grid. 
\end{lemma}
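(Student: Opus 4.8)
The plan is to realize an arbitrary permutation of $m$ robots occupying a straight path $L$ of length $m$ inside the grid, using only $O(m)$ synchronized moves, and to do so using extra rows/columns of the grid that lie adjacent to $L$ as temporary "staging" space. The key observation is that any permutation of a line can be decomposed into a bounded number of structurally simple primitives — rotations/cyclic shifts along $L$, and "parallel transpositions" that swap the contents of $L$ with those of a parallel path — each of which costs $O(m)$ makespan. So the skeleton of the argument is: (i) exhibit a constant-length pipeline of such primitives whose composition can produce any target permutation; (ii) verify each primitive costs $O(m)$; (iii) observe the construction only touches a constant-width band around $L$, so disjoint bands can run the same routine simultaneously without collision, giving the parallel clause.

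Concretely, I would first handle the one-dimensional rearrangement combinatorially. Think of $L$ as cells $1,\dots,m$ and imagine one free adjacent row $L'$ (and, if needed, a second adjacent row $L''$). A classical fact is that sorting a line reduces to a constant number of "brick-wall" rounds of adjacent transpositions (odd-even transposition sort needs $m$ rounds, which is too many — so instead I would use the stronger structure): route each robot to its target column by first spreading the robots out onto $L'$ in the correct relative order via a sequence of shift-and-drop moves, then compressing them back onto $L$. A cleaner route, and the one I would actually push, is to mimic the Rubik-Table idea in miniature — treat $L$ as a $1\times m$ object and use the parallel path $L'$ as a scratch register so that the whole permutation is achieved by: shift the contents of $L$ rightward by varying amounts onto $L'$, permute within $L'$ "for free" by choosing where each robot lands, then shift back. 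Each such macro-step is a synchronized translation of a group of robots by up to $m$ cells, hence $O(m)$ time, and only a constant number of macro-steps is needed.

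Then I would argue the embedding and parallelism. Since $L$ is a path in a 2D grid, at least one side of $L$ consists of grid vertices forming a parallel path of equal length; reserve a band of constant width (two or three rows) along that side as the workspace. All motions above stay inside this band, every cell used is a genuine grid vertex, and the moves respect the $4$-connected edge and collision constraints by construction (robots move in lockstep "conveyor-belt" fashion, so no two ever contend for a vertex or swap across an edge). For the multi-path clause, if the target paths are pairwise separated by at least the bandwidth, their workspaces are disjoint, so the per-path schedules compose into a single global schedule of the same $O(m)$ makespan.

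The main obstacle I expect is the parallelism/collision bookkeeping rather than the $1$-D combinatorics: one must be careful that the scratch band of one path does not overlap the scratch band (or the path itself) of a neighboring reconfiguration, and that "enough" free space genuinely exists near the boundary of the grid or when $L$ sits flush against an occupied region. In the setting where the entire grid is occupied (the maximum-density case this lemma feeds into), there is no idle scratch space at all, so the honest version of this lemma must either (a) assume the adjacent band is available as part of the hypothesis, or (b) be invoked only after a preliminary phase has cleared such a band — and the recursive application mentioned in the surrounding text is precisely what supplies it. I would therefore state and prove the lemma with the adjacent-free-band assumption made explicit, and defer the "how to always obtain the band" argument to the higher-level algorithm; this is the cleanest way to keep the $O(m)$ bound honest while matching how the result is used downstream. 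Since the excerpt attributes this to \cite{yu2018constant}, I would also simply cite that construction for the detailed collision schedule and only sketch the pipeline here.
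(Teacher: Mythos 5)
Your construction hinges on reserving a free adjacent band of scratch space and shifting robots onto it conveyor-belt style, and you correctly sense that this is the weak point: you end up proposing to weaken the lemma by adding a free-band hypothesis, or to defer to a clearing phase. But the lemma as cited needs no free space at all, and it is invoked in this paper precisely at maximum robot density (Theorem~\ref{t:rtm-makespan}), where no band can ever be cleared. The idea you are missing is the one the paper actually leans on (Fig.~\ref{fig:figure8} and \cite{yu2018constant}): on a $2\times 3$ subgrid that is \emph{fully occupied}, two designated adjacent robots can be swapped in three steps by three cyclic rotations; a rotation moves every robot on the cycle simultaneously, so it requires no empty vertex, and the composition returns the bystander robots to their original cells. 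An arbitrary permutation of the line is then realized by odd--even transposition sort: $m$ rounds, each round a collection of vertex-disjoint constant-size blocks performing their swaps in parallel in $O(1)$ steps, for $O(m)$ steps total. Your dismissal of odd--even sort because it ``needs $m$ rounds, which is too many'' is a miscount: $m$ rounds at constant cost per round is exactly the claimed $O(m)$, and the paper's own sharpened version (Lemma~\ref{l:fast-line-shuffle}) is literally odd--even sort with the constant tightened to $7m$ via Lemma~\ref{l:reconfigure}.

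The parallel-paths clause also comes out differently under the two approaches. Because the rotation-based swap is conservative on bystanders, the perpendicular width consumed by one line's shuffle is shared rather than reserved: the grid is partitioned into disjoint $3\times 2$ blocks and several parallel lines are sorted through the same blocks simultaneously, whereas your scheme requires each line to own a private disjoint workspace, which again cannot be arranged at high density. So the gap is concrete: the shift-onto-a-free-path mechanism cannot prove the lemma in the regime where it is used, while the cyclic-rotation swap does, with no added hypothesis.
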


The key operation is based on a localized, $3$-step pair swapping routine, shown in Fig.~\ref{fig:figure8}. For more details on the line shuffle routine, see \cite{yu2018constant}. 

\begin{figure}[h!]
%\vspace{-2mm}
        \centering
        \includegraphics[width=\linewidth]{./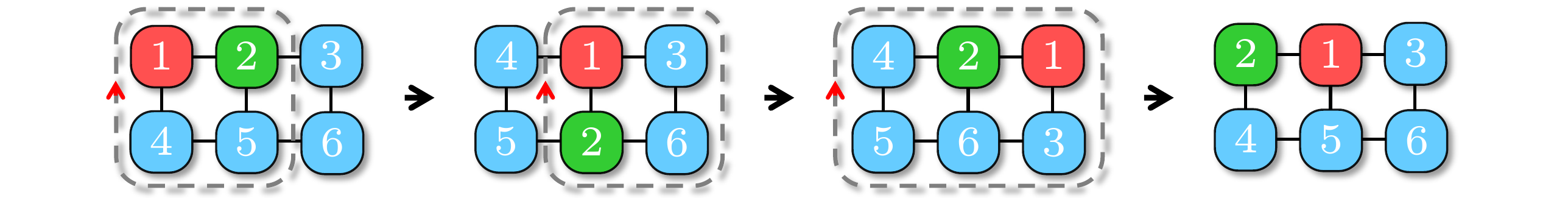}
\vspace{-3mm}
\caption{On a $2 \times 3$ grid, swapping two robots may be performed in three steps with three cyclic rotations.} 
        \label{fig:figure8}
\vspace{-4mm}
\end{figure}
    
The basic simulated labeled line-shuffle algorithm, however, has a large constant factor. Borrowing ideas from \emph{parallel odd-even sort} \cite{bitton1984taxonomy}, we can greatly reduce the constant factor in Lemma \ref{l:line-shuffle}. First, we need the following lemma. 

\begin{lemma}\label{l:reconfigure}
It takes at most seven steps and six steps to perform arbitrary combinations of pairwise horizontal swaps on $3\times 2$ grids and $4\times 2$ grids, respectively.
\end{lemma}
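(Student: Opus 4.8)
The plan is to regard this as a finite case analysis over \emph{synchronous rotation moves} on a fully occupied grid: in one step the robots may be rotated by an arbitrary amount along any collection of vertex-disjoint cycles of the grid, and on a $k\times 2$ grid the available cycles are the $k-1$ unit square faces and the outer boundary $2k$-cycle (this is the move model underlying Fig.~\ref{fig:figure8} and the $O(m)$ bound of Lemma~\ref{l:line-shuffle}). An ``arbitrary combination of pairwise horizontal swaps'' is specified by a subset $T$ of the rows: the two robots occupying row $i$ must be exchanged exactly when $i\in T$, and every other robot must return to its starting cell. There are $2^3=8$ and $2^4=16$ such targets for the $3\times 2$ and $4\times 2$ grids, and the top--bottom and left--right reflection symmetries of the grid collapse them into only a few representatives per case.

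First I would isolate the atomic gadget behind Fig.~\ref{fig:figure8}: a short commutator-type sequence --- rotate one unit face, rotate an overlapping unit face, undo the first --- that exchanges one designated horizontally adjacent pair while restoring all other robots, in a constant number of steps (three, in the $2\times 3$ layout shown). On the $3\times 2$ grid this disposes of $|T|=1$ outright, and of $|T|=2$ by running the gadget on each chosen row in turn (the faces used for the two rows overlap only in the shared middle row, so the second application preserves the first outcome), keeping the total at or below $7$. The single delicate case is $|T|=3$: chaining three copies of the gadget would cost about $9$ steps, so instead I would realize the simultaneous swap of a pair of \emph{adjacent} rows with one compact sequence that routes robots around the boundary $6$-cycle, then clean up the remaining row, organizing the whole routine to fit within $7$ steps. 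Should a hand-crafted sequence prove awkward, the bound is in any case certified by a breadth-first search over the configuration graph, which has at most $6!$ vertices.

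For the $4\times 2$ grid the decisive extra resource is that the top unit face (rows $1,2$) and the bottom unit face (rows $3,4$) are vertex-disjoint and hence may be rotated within a single step; this parallelism is precisely why the $4\times 2$ bound ($6$) can fall below the $3\times 2$ bound ($7$) even though there is an additional row to service. I would split the four rows into the pairs $\{1,2\}$ and $\{3,4\}$, run the two-row routine on the two pairs concurrently, and use the middle face (rows $2,3$) to mediate whatever sub-combination is required inside a pair, then tabulate the $16$ targets (after symmetry reduction) to confirm that each is reached in at most $6$ synchronous steps. The main obstacle throughout is confined to the cases with large $|T|$, where naive sequential composition overshoots the stated bounds and one must either discover the compact commutator identities by hand or defer to an exhaustive search; every remaining case is settled by simply exhibiting its move sequence, and we relegate that bookkeeping to \cite{guo2022sub}.
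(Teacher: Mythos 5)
In the end your argument and the paper's rest on the same mechanism: exhaustive verification of a finite set of targets ($2^3=8$ on the $3\times 2$ grid, $2^4=16$ on the $4\times 2$ grid). The paper's entire proof is to feed each case to a makespan-optimal integer-programming MAPF solver \cite{yu2016optimal} and observe that the optima never exceed $7$ and $6$ respectively; your breadth-first search over the configuration graph is the same idea in different clothing, and your hand-built commutator gadgets are scaffolding that you yourself concede does not close the hard cases (all three rows swapped on $3\times 2$, cf.\ Fig.~\ref{fig:six}, and essentially all of the $4\times 2$ analysis) without falling back on that search. So this is not a genuinely different route; the only question is whether the computation you describe is set up so that it actually certifies the lemma.

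As described, it is not, for two concrete reasons. First, the move model: in one synchronous step each robot traverses at most one edge, so a family of vertex-disjoint cycles may each be rotated by exactly \emph{one} position per step, not ``by an arbitrary amount''; rotating the boundary $6$-cycle by two positions costs two steps. A search that charges unit cost to arbitrary-amount rotations systematically undercounts and proves nothing. Second, your list of admissible cycles is incomplete: on a fully occupied grid the legal synchronous moves are one-position rotations along \emph{any} vertex-disjoint family of cycles of the grid graph, which on the $4\times 2$ grid includes the two $6$-cycles spanning rows $1$--$3$ and rows $2$--$4$, not just the three unit faces and the outer $8$-cycle. Omitting moves cannot produce a false certificate, but it can make the search fail to find the optimal solutions at all --- and the $4\times 2$ case, where the bound must drop to six despite the extra pair of robots, is precisely where you cannot afford to throw moves away. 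With those two corrections (one-position rotations, all vertex-disjoint cycle families), your exhaustive search over the at most $6!$ resp.\ $8!$ configurations is sound and coincides with what the paper does.
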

\ifarxiv
\begin{proof}[Proof of Lemma~\ref{l:reconfigure}]
Using integer programming \cite{yu2016optimal}, we exhaustively compute makespan-optimal solutions for arbitrary horizontal reconfiguration on $3\times 2$ ($8$ possible cases) and $4 \times 2$ grids ($16$ possible cases), which confirms the claim. 
\end{proof}
\fi

As an example, it takes seven steps to horizontally ``swap'' all three pairs of robots on a $3\times 2$ grid, as shown in Fig.~\ref{fig:six}. 

 \begin{figure}[h!]
 \vspace{-2mm}
        \centering
        \includegraphics[width=1\linewidth]{./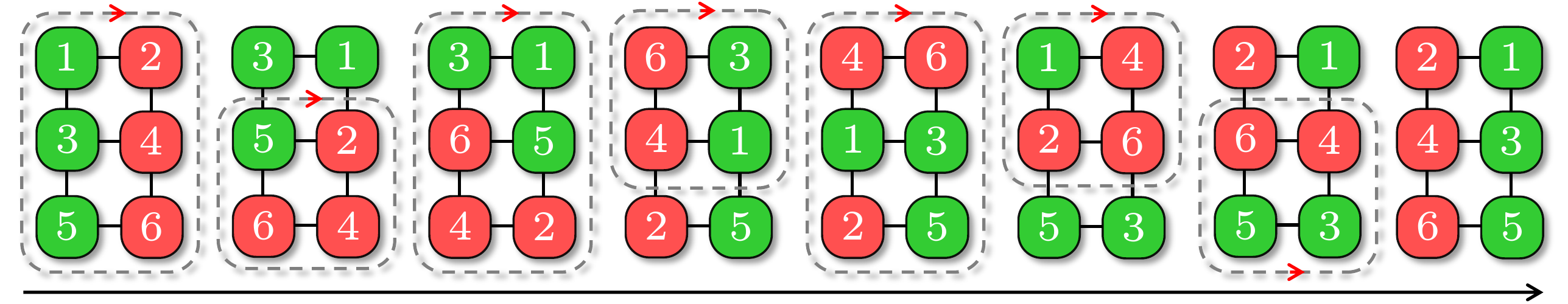}
\vspace{-3mm}
\caption{An example of a horizontal ``swap'' on a $3\times 2$ grid that takes seven steps, in which all three pairs are swapped. It takes at most seven steps to horizontally swap robots arbitrarily on a $3\times 2$ grid. } 
        \label{fig:six}
\vspace{-2mm}        
    \end{figure}

\def\rtmapf{\textsc{RTM}\xspace}
\begin{lemma}[Faster Line Shuffle]\label{l:fast-line-shuffle}
For $m$ robots on a straight path of length $m$, embedded in a 2D grid, they may be arbitrarily ordered in $7m$ steps. Moreover, multiple such reconfigurations can be performed simultaneously on parallel straight paths within the grid. 
\end{lemma}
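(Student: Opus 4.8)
The plan is to upgrade the routine behind Lemma~\ref{l:line-shuffle} by replacing its $\Theta(m)$ sequential pair swaps with a constant number of \emph{batched} rounds of odd--even transposition sort, and to charge each such round to the cheap gadget certified by Lemma~\ref{l:reconfigure}.

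\textbf{Reducing an arbitrary reordering to a sorting network.} Suppose the robot currently occupying the $i$-th cell of the path must end at cell $\pi(i)$; assign it the key $\pi(i)$. Producing the desired ordering is then exactly the task of sorting the keys $\pi(1),\dots,\pi(m)$ into increasing order along the path. I would run odd--even transposition sort, a depth-$m$ sorting network on $m$ wires whose round $j$ applies compare--exchange gates to the disjoint adjacent pairs $(1,2),(3,4),\dots$ when $j$ is odd and to $(2,3),(4,5),\dots$ when $j$ is even. Since this network sorts every input, executing its gates on the keys realizes $\pi$; what each round asks of us physically is to carry out \emph{some subset} of a fixed maximal matching of adjacent transpositions along the path.

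\textbf{Executing one round in seven steps.} Place two auxiliary parallel lines adjacent to the path (in the parallel-paths setting these are simply two of the other paths also being reordered; for a lone path they are held fixed). For an odd round, tile the relevant strip by cell-disjoint $3\times 2$ blocks, the block handling the path-pair $\{2k-1,2k\}$ being those two columns intersected with the path and its two neighboring lines; for an even round, shift the tiling by one cell along the path. Inside a block, the round demands an \emph{arbitrary} combination of the three horizontal swaps (one per line), which by Lemma~\ref{l:reconfigure} (see Fig.~\ref{fig:six}) is realized in at most seven steps with no robot leaving the block; as the blocks are disjoint the whole round finishes in $\le 7$ steps, so the $m$ rounds finish in $7m$ steps. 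When the number of parallel paths is not a multiple of three, or a path abuts the grid boundary, the leftover group of one or two lines is absorbed using the $4\times 2$ case of Lemma~\ref{l:reconfigure} (or a dummy line), which does not increase the round count. The parallelism claim is then immediate, since the gadgets used by different path-triples occupy disjoint cells.

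\textbf{Main obstacle.} The reduction to sorting and the depth-$m$ bound for odd--even transposition sort are classical; the real content is the second step -- verifying that one round of the sort (a maximal matching of adjacent transpositions, possibly acting on three stacked lines simultaneously) decomposes into independent $3\times 2$ gadgets, each discharged in seven steps by the exhaustive certificate of Lemma~\ref{l:reconfigure}, for both round parities and including the boundary cases. This is precisely where the constant $7$ in the $7m$ bound is pinned down.
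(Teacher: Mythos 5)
Your proposal is correct and follows essentially the same route as the paper's own proof: reduce the reordering to parallel odd--even transposition sort, and simulate each phase by tiling the strip (three parallel lines by two adjacent path positions, with $4\times 2$ blocks absorbing leftovers) so that Lemma~\ref{l:reconfigure} discharges every round in at most seven steps. The only cosmetic difference is that you count $m$ rounds where the paper counts $m-1$; both yield the stated $7m$ bound.
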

\ifarxiv
\begin{proof}
``Sorting'' of $m$ robots on a straight path of length $m$ may be realized using  parallel odd-even sort \cite{bitton1984taxonomy} in $m-1$ rounds, which only requires the ability to simulate potential pairwise ``swaps'' interleaving odd phases (swapping robots located at positions $2k + 1$ and $2k + 2$ on the path for some $k$) and even phases (swapping robots located at positions $2k + 2$ and $2k + 3$ on the path for some $k$). Here, it does not matter whether $m$ is odd or even. 
To simulate these swaps, we can partition the grid embedding the path into $3 \times 2$ grids in two ways for the two phases, as illustrated in Fig.~\ref{fig:odd-even}. 
\begin{figure}[h!]
        \centering
        \includegraphics[width=1\linewidth]{./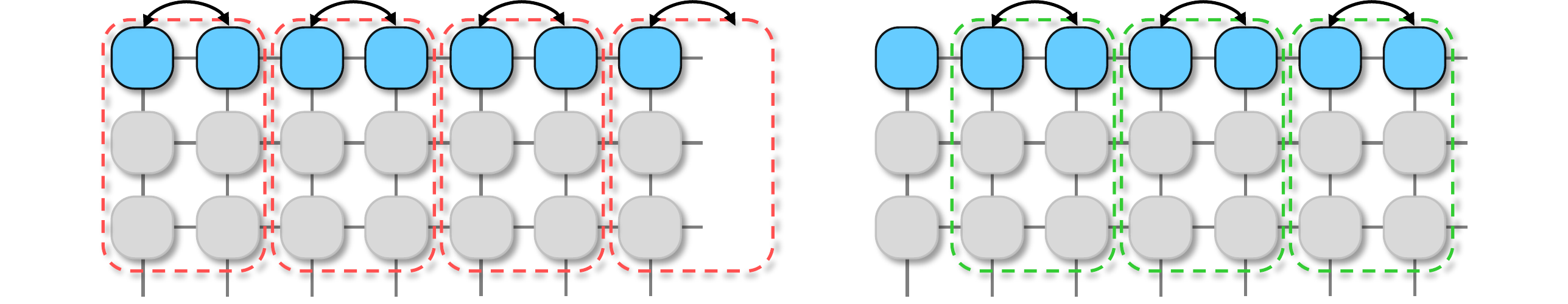}
\vspace{-3mm}
\caption{Partitioning a grid into disjoint $3 \times 2$ grids in two ways for simulating odd-even sort. The highlighted pairs of robots maybe independently ``swapped'' within each $3\times 2$ grid as needed.} 
        \label{fig:odd-even}
\vspace{-1.5mm}
    \end{figure}

A perfect partition requires that the second dimension of the grid, perpendicular to the straight path, be a multiple of $3$. If this is not the case, some partitions at the bottom can use $4 \times 2$ grids. By Lemma~\ref{l:reconfigure}, each odd-even sorting phase can be simulated using at most $7m$ steps. Clearly, shuffling on parallel paths is directly supported. 
\end{proof}
\fi

Combining \rta and fast line shuffle (Lemma~\ref{l:fast-line-shuffle}) yields a polynomial time \mpp algorithm for fully occupied grids with a makepsan of $7m_1 + 14m_2$. 

\begin{theorem}[\mpp on Grids under Maximum Robot Density, Upper Bound]\label{t:rtm-makespan}
\mpp on an $m_1\times m_2$ grid, $m_1 \ge m_2 \ge 3$, with each grid vertex occupied by a robot, can be solved in polynomial time in a makespan of $7m_1 + 14m_2$.
\end{theorem}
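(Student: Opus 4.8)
The plan is to reduce the fully occupied \mpp instance to a \emph{labeled} Rubik Table problem, invoke the Rubik Table Theorem (Theorem~\ref{t:rta}) to get a short sequence of shuffles, and then physically realize each shuffle on the grid using the fast line shuffle of Lemma~\ref{l:fast-line-shuffle}. Concretely, since every grid cell is occupied at both the start $S$ and the goal $G$, the desired reconfiguration is a permutation of the $m_1m_2$ robots. I would view the grid as the table $M$, take $S$ as the initial configuration $X_I$, and (re)label the robots so that the canonical target configuration of the \emph{labeled} \rtp is precisely $G$ --- e.g., the robot bound for cell $(r,c)$ receives color $r$ and a within-row label that places it in column $c$. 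Because $G$ is a bijection, each color occurs with the correct multiplicity, so this is a legal \rtp instance.

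Next I would schedule the shuffles. By Theorem~\ref{t:rta}, the labeled instance is solved by $m_1$ row shuffles, then $m_2$ column shuffles, then $m_1$ row shuffles. I would group these into three parallel \emph{rounds} --- a row round, a column round, and a second row round --- because within a single round the rows (resp.\ columns) are permuted independently of each other. I would deliberately keep this row--column--row ordering rather than the column--row--column variant mentioned earlier: since $m_1 \ge m_2$, putting the two costlier rounds on the shorter (row) direction is exactly what produces the stated bound instead of a larger one.

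Each row is a length-$m_2$ straight path and each column a length-$m_1$ straight path embedded in the grid, so by Lemma~\ref{l:fast-line-shuffle} a row round can be executed in $7m_2$ steps and a column round in $7m_1$ steps, with all parallel rows (resp.\ columns) shuffled simultaneously. This works despite the grid having no spare empty cell: one partitions the rows into disjoint bands of three (with a few bands of four when $3 \nmid m_1$, exactly as in Lemmas~\ref{l:reconfigure} and~\ref{l:fast-line-shuffle}, the smallest $m_1$ handled directly), runs parallel odd--even transposition sort inside each band, and observes that every odd/even phase splits into disjoint $3\times 2$ (or $4\times 2$) blocks, each of which realizes an arbitrary combination of its row-wise swaps in at most seven steps. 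The three rounds are sequential, each round's end configuration is the next round's start configuration by construction, and bands within a round are vertex-disjoint, so the concatenated motion is collision-free with makespan $7m_2 + 7m_1 + 7m_2 = 7m_1 + 14m_2$. Polynomiality is immediate: \rta produces the shuffle list in $O(m_1^2 m_2)$ deterministic time (Sec.~\ref{sec:pre}) and expanding each shuffle into its line-shuffle moves is polynomial.

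The only point that needs real care is this last step: carrying out a full parallel round of $m_1$ (or $m_2$) shuffles on a grid with \emph{no} free cell, and verifying that the band / odd--even decomposition costs exactly $7m_2$ (resp.\ $7m_1$) steps --- including the boundary bookkeeping when the relevant dimension is not a multiple of three. Given Lemmas~\ref{l:reconfigure} and~\ref{l:fast-line-shuffle} and Theorem~\ref{t:rta}, the reduction and the shuffle-count arithmetic are then routine.
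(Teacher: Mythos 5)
Your proposal is correct and follows essentially the same route as the paper: reduce the fully occupied instance to a labeled Rubik Table problem, group the $2m_1+m_2$ shuffles of Theorem~\ref{t:rta} into three parallel rounds, and realize each round with the $7m$-step odd--even line shuffle of Lemma~\ref{l:fast-line-shuffle} (via the $3\times 2$ / $4\times 2$ block decomposition of Lemma~\ref{l:reconfigure}), yielding $7m_2+7m_1+7m_2=7m_1+14m_2$. Your explicit remark that the doubled round must be placed on the shorter ($m_2$-length) direction is exactly the accounting that produces the stated bound, so nothing is missing.
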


We note that the case of $m_2 = 2$ can also be solved similarly except when $m_1 = 2$, with a slightly altered procedure since we can only use partitions of $2 \times 3$ grids. We omit the details for this minor case which readers can readily fill in. 

The straightforward pseudo-code for RTM, the \rta based algorithm for \mpp on grids supporting the maximum possible robot density, is given in Alg.~\ref{alg:rubik}. The comments in the main \textsc{RTM} routine indicate the corresponding \rta phases. For \mpp on an $m_1 \times m_2$ grid with row-column coordinates $(x, y)$, we say robot $i$ belongs to color $1 \le j \le m_1$ if $g_i.y=j$.
Function $\texttt{Prepare()}$ in the first phase finds intermediate states $\{\tau_i\}$ for each robot through perfect matchings and routes them towards the intermediate states by (simulated) column shuffles. 
If the robot density is smaller than required, we may fill the table with ``virtual'' robots \cite{han2018sear,yu2018constant}.
For each robot $i$ we have $\tau_i.y=s_i.y$. 
Function $\texttt{ColumnFitting()}$ in the second phase routes the robots to their second intermediate states $\{\mu_i\}$ through row shuffles where $\mu_{i}.x=\tau_{i}.x$ and $\mu_i.y=g_i.y$. 
In the last phase, function $\texttt{RowFitting()}$ routes the robots to their final goal positions using additional column shuffles.

\begin{algorithm}
\begin{small}
\DontPrintSemicolon
\SetKwProg{Fn}{Function}{:}{}
\SetKwFunction{Fprepare}{Prepare}
\SetKwFunction{Fcolumnfitting}{ColumnFitting}
\SetKwFunction{Frowfitting}{RowFitting}
\SetKwFunction{FRtMapf}{RTM}

  \caption{Rubik Table Based \mpp Solver \label{alg:rubik}}
%   \KwIn{robots $A=\{a, $|A| = n = m^2$}
  \KwIn{Start and goal vertices $S=\{s_i\}$ and $G=\{g_i\}$}
  \Fn{\textsc{RTM}({$S,G$})}{
$\texttt{Prepare}(S,G)$ \quad\quad\quad\hspace{-0.5mm} \Comment{Computing Fig.~\ref{fig:rubik}(b)}\;
$\texttt{ColumnFitting}(S,G)$ \Comment{Fig.~\ref{fig:rubik}(a) $\to$ Fig.~\ref{fig:rubik}(c)}\;
$\texttt{RowFitting}(S,G)$ \quad\quad\hspace{-1.5mm} \Comment{Fig.~\ref{fig:rubik}(c) $\to$
Fig.~\ref{fig:rubik}(d)}\;
}
\vspace{1mm}
  \Fn{\Fprepare{$S,G$}}{
  
    $A\leftarrow [1,...,m_1m_2]$\;
  \For{$(t,r)\in [1,...,m_1]\times[1,...,m_1]$}{
  \If{$\exists i\in A$ where $s_i.x=r\wedge g_i.y=t$}
  {
 add edge $(t,r)$ to $B(T,R)$\;
 remove $i$ from $A$ \;
  }
    }
 compute matchings $\mathcal{M}_1,...,\mathcal{M}_{m_2}$ of $B(T, R)$\;
 $A\leftarrow [1,...,m_1m_2]$\;
 \ForEach{$\mathcal{M}_r$ and $(t,r)\in \mathcal{M}_r$}{
 \If{$\exists i\in A$ where $s_i.x=r\wedge g_i.y=t$}{
 $\tau_i\leftarrow (r, s_i.y)$ and remove $i$ from $A$\;
  mark robot $i$ to go to $\tau_i$\;
 }
 }
 perform simulated column shuffles in parallel 
 }
 \vspace{1mm}
  \Fn{\Fcolumnfitting{$S,G$}}{
        \ForEach{$i\in [1,...,m_1m_2]$}{
        $\mu_i\leftarrow (\tau_i.x,g_i.y)$ and mark robot $i$ to go to $\mu_i$\;
        }
  perform simulated row shuffles in parallel      
  }
\vspace{1mm}
  \Fn{\Frowfitting{$S,G$}}{
        \ForEach{$i\in [1,...,m_1m_2]$}{
        mark robot $i$ to go to $g_i$\;
        }
  perform simulated column shuffles in parallel 
  }
\end{small}
\end{algorithm}

We now establish the optimality guarantee of \rtmapf, assuming \mpp instances are randomly generated. For rearranging robots on an $m_1\times m_2$ grid, the expected makespan lower bound on random instances is $\Omega(m_1 + m_2)$ \cite{yu2018constant}. To obtain a finer optimality ratio, however, a finer lower bound is needed, which is established in the following. 

\begin{proposition}[Precise Makespan Lower Bound of \mpp on Grids]\label{p:makespan-lower}
The minimum makespan of random \mpp instances on an $m_1 \times m_2$ grid with $\Theta(m_1m_2)$ robots is $m_1 + m_2 - o(m_1)$ with arbitrarily high probability as $m_1\to \infty$.
\end{proposition}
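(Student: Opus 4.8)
The plan is to obtain the bound from path-length considerations alone. Since each robot advances along at most one grid edge per time step, every feasible solution has makespan $\max_i |P_i| \ge \max_i \mathrm{dist}_{\mathcal G}(s_i,g_i)$, where $\mathrm{dist}_{\mathcal G}$ is the (Manhattan) graph distance on the $m_1\times m_2$ grid. Hence it suffices to show that, with probability tending to $1$ as $m_1\to\infty$, \emph{some} robot must travel between two near-opposite corners of the grid, i.e. $\mathrm{dist}_{\mathcal G}(s_i,g_i)\ge m_1+m_2-o(m_1)$ for at least one $i$.

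Fix a parameter $\delta=\delta(m_1)\to 0$, to be pinned down at the end. Let $B_{TL}$ be the top-left sub-grid spanned by the first $\lceil\delta m_1\rceil$ rows and first $\lceil\delta m_2\rceil$ columns, and let $B_{BR}$ be the symmetric bottom-right sub-grid; the ceilings make both blocks nonempty even when $\delta m_2<1$, and for $m_1$ large the two blocks are disjoint. For any $u\in B_{TL}$ and $v\in B_{BR}$ one has $\mathrm{dist}_{\mathcal G}(u,v)\ge (m_1-2\lceil\delta m_1\rceil)+(m_2-2\lceil\delta m_2\rceil)\ge m_1+m_2-4\delta m_1-4$ (using $m_2\le m_1$). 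Call robot $i$ \emph{good} if $s_i\in B_{TL}$ and $g_i\in B_{BR}$; a single good robot already forces the makespan to be at least $m_1+m_2-4\delta m_1-4$. Note $|B_{TL}|=|B_{BR}|\ge \delta^2 m_1 m_2=\delta^2|V|$.

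For the probabilistic step I would model a random \mpp instance as an independent pair of uniformly random injections $[n]\to V$ determining $S$ and $G$ (the argument is robust to the precise convention), and reveal starts then goals to bypass the dependence among robots. First, $N_1:=|\{i:s_i\in B_{TL}\}|$ is hypergeometric with mean $n|B_{TL}|/|V|\ge n\delta^2$, so a Chernoff-type tail bound for the hypergeometric distribution gives $N_1\ge\tfrac12 n\delta^2$ except with probability $e^{-\Omega(n\delta^2)}$. Second, conditioned on \emph{which} indices have start in $B_{TL}$, the goals of those $N_1$ robots still form a uniformly random injection of $N_1$ elements into $V$, independent of the starts, so the chance that none of them lands in $B_{BR}$ equals $\binom{|V|-|B_{BR}|}{N_1}\big/\binom{|V|}{N_1}\le(1-\delta^2)^{N_1}\le e^{-N_1\delta^2}$. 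Combining the two stages, the probability that no good robot exists is at most $e^{-\Omega(n\delta^2)}+e^{-\Omega(n\delta^4)}=e^{-\Omega(n\delta^4)}$.

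It remains to choose $\delta$ so that both demands hold simultaneously: $\delta\to0$ makes the distance deficit $4\delta m_1+4=o(m_1)$, while $n\delta^4\to\infty$ makes the failure probability vanish. Since $n=\Theta(m_1m_2)$ with $m_1\ge m_2\ge 1$, the choice $\delta=m_1^{-1/8}$ already works: the deficit is $O(m_1^{7/8})=o(m_1)$ and $n\delta^4=\Omega(m_1^{1/2})\to\infty$, so with probability $1-e^{-\Omega(m_1^{1/2})}$ some robot travels distance $\ge m_1+m_2-O(m_1^{7/8})$, whence the minimum makespan is $m_1+m_2-o(m_1)$ with probability approaching $1$. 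The one genuine subtlety — and the place I expect the real work — is precisely the second stage above: we need a \emph{single} robot to connect the two far corners, not merely an occupied start cell in $B_{TL}$ and an occupied goal cell in $B_{BR}$, which is why the condition-on-starts-then-reveal-goals argument (equivalently, a negative-association bound on the events $A_i=\{s_i\in B_{TL},\,g_i\in B_{BR}\}$) is needed rather than a naive independence computation; everything else, including the small-$m_2$ corner case, is handled by the ceilings and the generous slack in $n=\Theta(m_1m_2)$.
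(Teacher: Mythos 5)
Your proposal is correct and follows essentially the same route as the paper's proof: corner sub-grids of side fraction $m_1^{-1/8}$, a distance bound of $m_1+m_2-o(m_1)$ for any corner-to-corner robot, and an exponentially small failure probability of the form $e^{-\Omega(n\delta^4)}$. The only difference is that you treat the injection constraint carefully via a reveal-starts-then-goals/negative-association argument, whereas the paper simply computes $1-(1-\alpha^4)^{cm_1m_2}$ as if robots were independent; your version is the more rigorous rendering of the same idea.
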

\ifarxiv
\begin{proof}
Without loss of generality, let the constant in $\Theta(m_1m_2)$ be some $c > 0$, i.e., there are $cm_1m_2$ robots. 
We examine the top left and bottom right corners of the $m_1 \times m_2$ grid $\mathcal G$. In particular, let $\gtl$ (resp.,  $\gbr$) be the top left (resp., bottom right) $\alpha m_1\times \alpha m_2$ sub-grid of $\mathcal G$, for some  positive constant $\alpha \ll 1$.
For $u \in V(\gtl)$ and $v \in V(\gbr)$, assuming each grid edge has unit distance, then the Manhattan distance between $u$ and $v$ is at least $(1-2\alpha)(m_1 + m_2)$. 
Now, the probability that some $u  \in V(\gtl)$ and $v \in V(\gbr)$ are the start and goal, respectively, for a single robot, is $\alpha^4$. For $cm_1m_2$ robots, the probability that at least one robot's start and goal fall into $\gtl$ and $\gbr$, respectively, is $p =1 - (1 - \alpha^4)^{cm_1m_2}$. 

Because $(1 - x)^y < e^{-xy}$ for $0 < x < 1$ and $y > 0$ \footnote{This is because $\log(1-x) < -x$ for $0 < x < 1$; multiplying both sides by a positive $y$ and exponentiate with base $e$ then yield the inequality.}, $p > 1 - e^{-\alpha^4cm_1m_2}$. Therefore, for arbitrarily small $\alpha$, we may choose $m_1$ such that $p$ is arbitrarily close to $1$. 
For example, we may let $\alpha = m_1^{-\frac{1}{8}}$, which decays to zero as $m_1 \to \infty$, then it holds that the makespan is $(1 - 2\alpha)(m_1 + m_2) = m_1 + m_2 - 2m_1^{-\frac{1}{8}}(m_1 + m_2) = m_1 + m_2 - o(m_1)$ with probability $p > 1 - e^{-c\sqrt{m_1}m_2}$. 
\end{proof}
\fi

Comparing the upper bound established in Theorem~\ref{t:rtm-makespan} and the lower bound from Proposition~\ref{p:makespan-lower} immediately yields

\begin{theorem}[Optimality Guarantee of \rtmapf]\label{t:rtmapf}
For random \mpp instances on an $m_1 \times m_2$ grid with $\Omega(m_1m_2)$ robots, $m_1 \ge m_2 \ge 3$, as $m_1\to \infty$, \rtmapf computes in polynomial time solutions that are $7(1 + \frac{m_2}{m_1+m_2})$-makespan optimal, with high probability.
\end{theorem}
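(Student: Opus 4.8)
The plan is to obtain the claimed ratio by dividing the deterministic makespan upper bound of Theorem~\ref{t:rtm-makespan} by the high-probability makespan lower bound of Proposition~\ref{p:makespan-lower}, and then letting $m_1 \to \infty$. Before that, a small amount of glue is needed because the hypothesis allows fewer than $m_1m_2$ robots while Theorem~\ref{t:rtm-makespan} is stated for a fully occupied grid.

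First I would reduce to the maximum-density case. By assumption there are $n = \Omega(m_1m_2)$ robots, i.e. $cm_1m_2 \le n \le m_1m_2$ for some constant $c>0$, so $n = \Theta(m_1m_2)$. As already noted in the description of \texttt{Prepare()}, we pad the instance with virtual robots until every vertex of $\mathcal G$ carries a robot, run \rtmapf on this padded instance, and discard the virtual robots' motions at the end. Theorem~\ref{t:rtm-makespan} then guarantees, in polynomial time, a collision-free schedule of makespan at most $7m_1 + 14m_2$ in which, in particular, every real robot reaches its goal; the polynomial running time follows from the $O(m_1^2m_2)$-time \rta (Theorem~\ref{t:rta}) together with the $O(m)$-step fast line shuffle of Lemma~\ref{l:fast-line-shuffle}, each of the three shuffle rounds contributing only polynomially many elementary moves. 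Since $n = \Theta(m_1m_2)$, Proposition~\ref{p:makespan-lower} applies to the (random) instance, so with probability $p \to 1$ as $m_1 \to \infty$ every feasible solution has makespan at least $m_1 + m_2 - o(m_1)$.

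Next, conditioned on the event that this lower bound holds, the optimality ratio of the \rtmapf solution is at most
\[
\frac{7m_1 + 14m_2}{m_1 + m_2 - o(m_1)} = 7\cdot\frac{m_1 + 2m_2}{m_1+m_2}\cdot\frac{1}{1 - o(1)},
\]
where I used that $m_1 + m_2 \ge m_1$, hence $o(m_1)/(m_1+m_2) = o(1)$. Letting $m_1 \to \infty$, the factor $1/(1-o(1)) \to 1$, so the ratio tends to $7\cdot\frac{m_1+2m_2}{m_1+m_2} = 7\bigl(1 + \tfrac{m_2}{m_1+m_2}\bigr)$, which is exactly the asserted bound, achieved with high probability.

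The only delicate point is purely bookkeeping in the asymptotics: one has to be careful that ``as $m_1\to\infty$'' is meaningful even though $m_2$ may also grow, and that the $o(m_1)$ slack in the denominator is negligible relative to $m_1+m_2$. Because $m_2 \le m_1$ forces $m_1+m_2 \in [m_1, 2m_1]$, this is immediate, so I do not expect any substantive obstacle here — the theorem is essentially a corollary of Theorem~\ref{t:rtm-makespan} and Proposition~\ref{p:makespan-lower}, and the main content has already been carried in those two results.
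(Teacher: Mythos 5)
Your proposal is correct and follows exactly the paper's route: the paper derives Theorem~\ref{t:rtmapf} by directly comparing the $7m_1+14m_2$ upper bound of Theorem~\ref{t:rtm-makespan} with the $m_1+m_2-o(m_1)$ high-probability lower bound of Proposition~\ref{p:makespan-lower}, just as you do. The extra care you take with virtual-robot padding and the $o(m_1)/(m_1+m_2)=o(1)$ bookkeeping is consistent with remarks already made in the paper and does not constitute a different argument.
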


We emphasize that \rtmapf always runs in polynomial time and is not limited by any probabilistic guarantee; the high probability guarantee is only for solution optimality. The same is true for other algorithms' high probability guarantees proposed in this paper. We also note that high probability guarantees are stronger than and imply guarantees in expectation.

\section{Near-Optimally Solving \mpp with up to One Third Robot Density}\label{sec:1:2}
Though \rtmapf runs in polynomial time and provides constant factor makespan optimality in expectation, the constant factor is still relatively large due to the extreme density. In practice, a robot density of around $\frac{1}{3}$ (i.e., $n = \frac{m_1m_2}{3}$) is already very high. As it turns out,  with $n = cm_1m_2$ for some constant $c > 0$ and $n \le \frac{m_1m_2}{3}$, which is assumed throughout this section, the constant factor can be dropped significantly by employing a ``highway'' heuristic  to simulate the row/column shuffle operations.  

\subsection{Rubik Table for \mpp with ``Highway'' Shuffle Primitive}
\textbf{Random \mpp instances.}
For the highway heuristics, we first work with random \mpp instances.
Let us assume for the moment that $m_1$ and $m_2$ are multiples of three; we partition $\mathcal G$ into $3\times 3$ cells (see, e.g., Fig.~\ref{fig:jd_center}(b) and Fig.~\ref{fig:example}).
We use Fig.~\ref{fig:example}, where Fig.~\ref{fig:example}(a) is a random start configuration and Fig.~\ref{fig:example}(f) is a random goal configuration, as an example to illustrate \rth -- Rubik Table (for \mpp) with Highways, targeting robot density up to $\frac{1}{3}$. 
\rth involves two phases: \emph{anonymous reconfiguration} and \emph{\mpp resolution with Rubik Table and highway heuristics}. 

\begin{figure}[htbp]
%\vspace{-3mm}
\centering

        \begin{overpic}[width=\linewidth]{./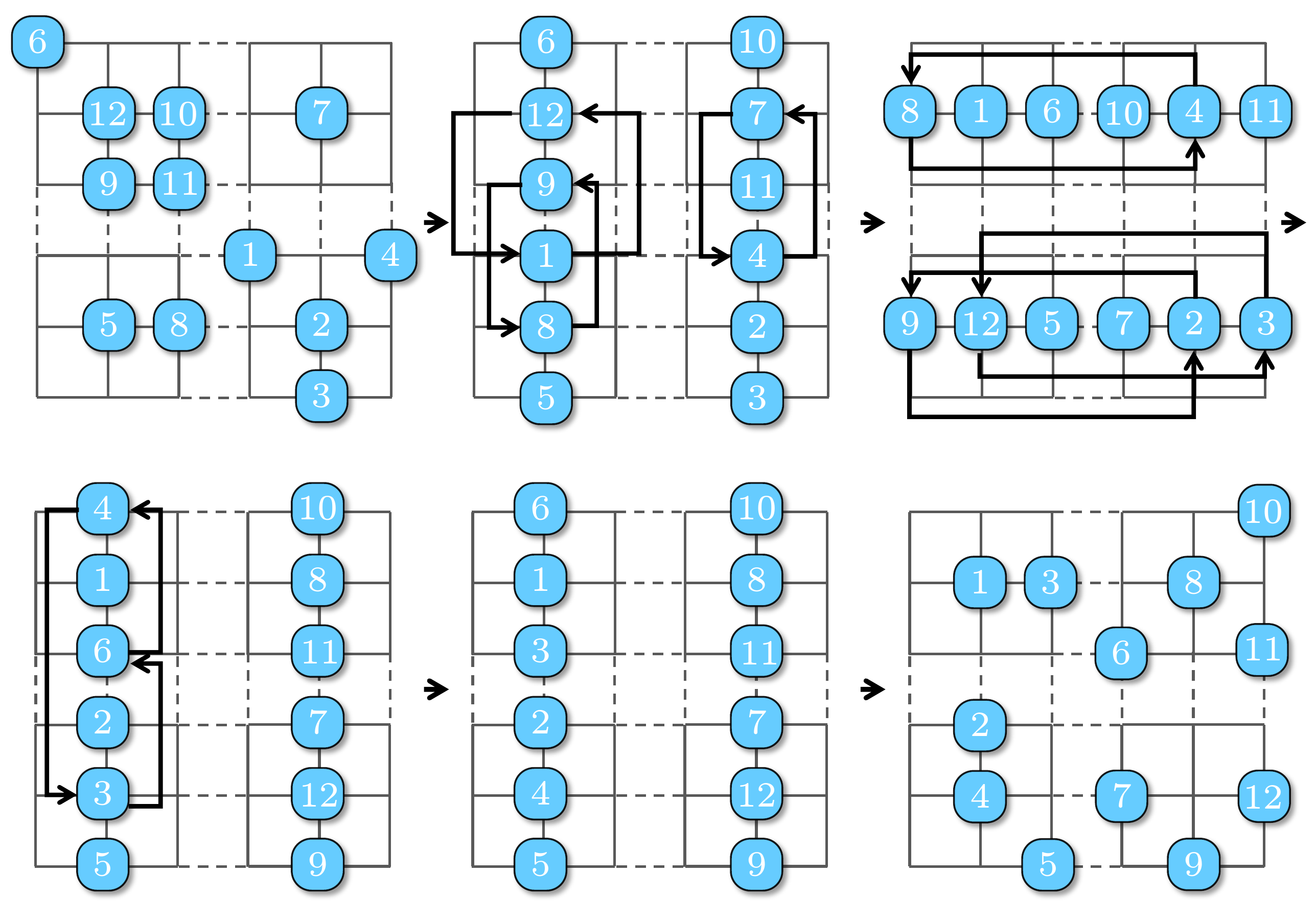}
             \footnotesize
             \put(14.5, 35) {(a)}
             \put(47.5, 35) {(b)}
             \put(81.5, 35) {(c)}
             \put(14.5, -1) {(d)}
             \put(47.5, -1) {(e)}
             \put(81.5, -1) {(f)}
        
        \end{overpic}
\vspace{-2mm}        
        \caption{An example of applying \rth to solve an \mpp instance. (a) The start configuration; (b) The start balanced configuration obtained from (a);  (c) The intermediate configuration obtained from the Rubik Table preparation phase; (d) The intermediate configuration obtained from the column fitting phase. Apply additional column shuffles for labeled items; (e) The goal balanced configuration obtained from the goal configuration; (f) The goal configuration.} 
        \label{fig:example}
\vspace{-2mm}        
    \end{figure}

In the anonymous reconfiguration phase, in which robots are treated as being indistinguishable or unlabeled, arbitrary start and goal configurations (under $\frac{1}{3}$ robot density) are converted to intermediate configurations where each $3 \times 3$ cell contains no more than $3$ robots. We call such configurations \emph{balanced configurations}. With high probability, random \mpp instances are not far from being balanced. To establish this result (Proposition~\ref{p:phase:1}), we need the following. 
\begin{theorem}[Minimax Grid Matching \cite{leighton1989tight}]\label{t:minimax}
Consider an $m \times m$ square containing $m^2$ points following the uniform distribution. Let $\ell$ be the minimum length such that there exists a perfect matching of the $m^2$ points to the grid points in the square for which the distance between every pair of matched points is at most $\ell$. Then $\ell = O(\log^{\frac{3}{4}}m)$ with high probability.
\end{theorem}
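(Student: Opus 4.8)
This is the Leighton--Shor minimax grid matching bound; the intended application (Proposition~\ref{p:phase:1}, balanced configurations) needs only the upper estimate $\ell = O(\log^{3/4} m)$ with high probability, so I aim at that. The plan is to (i) convert ``a perfect matching with all edges of length $\le \ell$ exists'' into a Hall-type discrepancy condition on the random point set, and then (ii) certify that condition; only (ii) is substantial. For step (i), build the bipartite graph on the $m^2$ random points and the $m^2$ grid points, joining $x$ to $g$ iff $\|x-g\|\le \ell$. By the defect form of Hall's theorem (equivalently, max-flow/min-cut for the corresponding bottleneck transportation problem), a perfect matching fails to exist iff there is a set $S$ of random points with joint neighborhood $N_\ell(S)=\bigcup_{x\in S}B_\ell(x)$ containing fewer than $|S|$ grid points. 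Since $S$ sits inside a ``core'' region $R$ (a union of unit cells, $\mathrm{area}(R)\approx|S|$) whose $\ell$-dilation is contained in $N_\ell(S)$, and the grid places $\mathrm{area}(R)\pm O(\mathrm{per}(R))$ points in $R$ plus $\Omega(\ell\,\mathrm{per}(R))$ more in the collar $N_\ell(R)\setminus R$, the existence of an obstruction implies some lattice region $R$ has discrepancy $\mu_m(R)-\mathrm{area}(R)\gtrsim \ell\,\mathrm{per}(R)$, where $\mu_m(R)$ is the number of random points in $R$. So it suffices to prove the uniform bound
\[
 \big|\mu_m(R)-\mathrm{area}(R)\big| \;=\; O\!\big(\ell\cdot\mathrm{per}(R)\big)
\]
simultaneously over all such test regions, with high probability, for $\ell = O(\log^{3/4} m)$.

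\emph{The crux.} For a single region of area $A$ and perimeter $p$, $\mu_m(R)-\mathrm{area}(R)$ is a centered sum of $m^2$ independent indicators of variance $\le A$, so $\Pr[|\cdot|\ge t]\le 2e^{-c\min(t^2/A,\,t)}$; the difficulty is the union bound. The number of lattice regions of perimeter $p$ is $m^2\,2^{\Theta(p)}$ (lattice-animal count) and by isoperimetry such a region can have area up to $\Theta(p^2)$, so at the target $t\asymp \ell p$ the exponent $t^2/A$ collapses to $\Theta(\ell^2)$, which dominates $2^{\Theta(p)}$ only when $\ell^2\gtrsim p$; over the relevant range of $p$ this one-shot argument yields merely $\ell = O(\sqrt m)$. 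Leighton--Shor's resolution is constructive: build the matching by recursively refining the square, at scale $s$ into $(m/s)^2$ axis-aligned subsquares; a union bound \emph{only} over these $(m/s)^2$ squares shows each carries its nominal count up to $O\!\big(s\sqrt{\log m}\big)$ points w.h.p.; these small imbalances are repaired by rerouting the excess within thin collars along subsquare boundaries before recursing, and a point's total displacement — summed across scales, with a scale schedule chosen to balance the number of subsquares against the boundary length that must carry the rerouting flow — comes out to $O(\log^{3/4} m)$, which caps the longest matching edge. (The same estimate can alternatively be obtained via a generic-chaining bound for the empirical process indexed by these regions.)

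\emph{Main obstacle.} Everything but the recursive construction is routine; the entire difficulty lies in the multi-scale step above — arranging the schedule so that the accumulated displacement is exactly $\log^{3/4} m$ rather than the $\log m$ or $\log^{3/2} m$ produced by a naive telescoping over $\Theta(\log m)$ scales, and controlling the flow needed to correct imbalances whose sign pattern along a boundary can itself be adversarial. This delicate optimization is precisely the content of \cite{leighton1989tight}, whose bound we invoke.
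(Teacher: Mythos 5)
The paper offers no proof of this statement: it is quoted verbatim from Leighton and Shor \cite{leighton1989tight} and invoked as a black box, so your ultimate move of deferring the multi-scale construction to that reference is exactly the paper's own treatment. Your preamble --- the Hall-type reduction to a uniform discrepancy bound over lattice regions, the observation that a one-shot union bound over lattice animals cannot get below a polynomial in $m$, and the identification of the recursive rebalancing schedule as the sole source of the $\log^{3/4}$ exponent --- is a fair roadmap of the known argument, but it is a roadmap rather than a proof (even the collar-area step needs care for wiggly boundaries), and everything that makes the theorem true lives inside the step you hand off to the citation.
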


Theorem~\ref{t:minimax} applies to rectangles with the longer side being $m$ as well (Theorem 3 in \cite{leighton1989tight}). 

\begin{proposition}\label{p:phase:1}
On an $m_1\times m_2$ grid, with high probability, a random configuration of $n = \frac{m_1m_2}{3}$ robots is of distance $o(m_1)$ to a balanced configuration. 
\end{proposition}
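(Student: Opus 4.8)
The plan is to show that a uniformly random placement of $n = \frac{m_1 m_2}{3}$ robots can be transformed, by moving each robot only $o(m_1)$ in total, into a configuration in which every $3\times 3$ cell holds at most three robots. The natural device is the Minimax Grid Matching theorem (Theorem~\ref{t:minimax}), applied at the coarse-grained level of $3\times 3$ cells. First I would set up the coarsened picture: partition $\mathcal G$ into its $\frac{m_1}{3}\times\frac{m_2}{3}$ grid of $3\times 3$ cells, and observe that the number of robots landing in a given cell is a sum of $n$ indicator variables with mean exactly $3\cdot\frac{1}{m_1 m_2}\cdot\frac{m_1 m_2}{3}\cdot\frac{1}{(\text{number of cells})}$—more simply, the expected occupancy of each cell is $3$. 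So ``balanced'' is precisely ``every cell at its mean occupancy or below,'' and the question is how far the random configuration deviates from this.

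The key step is to feed the cell occupancies into Theorem~\ref{t:minimax}. Think of each robot as a ``point'' and each of the $3$ target slots per cell as a ``grid point,'' giving $m_1 m_2$ points and $m_1 m_2 / 3 \cdot 3 = $ well, we have only $n = m_1m_2/3$ robots but $m_1 m_2/3$ cells times $3$ slots $= m_1m_2/3\cdot 3$; to make the counts match one pads with virtual robots so that there are exactly three per cell's worth of demand, or—cleaner—one works directly with the $\frac{m_1}{3}\times\frac{m_2}{3}$ array of cells, treating each cell as a single site with capacity $3$, and invokes the rectangular version of Theorem~\ref{t:minimax} (capacitated minimax matching follows from the uncapacitated statement by splitting each capacity-$c$ site into $c$ unit sites at the same location, which does not change distances). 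Scaling: a uniform point in $\mathcal G$ induces a near-uniform point in the coarsened $\frac{m_1}{3}\times\frac{m_2}{3}$ array, so Theorem~\ref{t:minimax} gives a perfect (capacitated) matching of robots to cells in which every robot moves at most $O(\log^{3/4}(m_1/3)) = O(\log^{3/4} m_1)$ cells, with high probability. Translating back to the original grid, each robot moves $O(\log^{3/4} m_1)\cdot 3 + O(1) = O(\log^{3/4} m_1)$ vertices, and $O(\log^{3/4} m_1) = o(m_1)$. Summing or maxing over robots, the configuration is within $o(m_1)$ of a balanced one (whether ``distance'' means max individual displacement or the total is immaterial since the per-robot bound is uniform and $o(m_1)$; if a total/average is meant, it is $o(m_1)$ per robot which is the relevant quantity for the makespan argument).

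I would finish by handling the two loose ends. First, the divisibility assumption that $m_1, m_2$ are multiples of $3$: if not, the leftover strip of width $\le 2$ along one or two sides is absorbed into adjacent cells (making a few $3\times 4$, $3\times 5$, or $4\times 3$ cells, etc.), which changes capacities by an additive constant and does not affect the asymptotics. Second, the mismatch between ``$n$ robots'' and ``$3$ per cell''—since $n = m_1m_2/3$ equals the total capacity exactly when all cells are $3\times 3$, the capacitated matching is in fact a perfect matching and no virtual padding is needed; in the non-divisible case one pads with $o(m_1 m_2)$ virtual robots, which the Rubik-Table machinery downstream already accommodates (cf.\ the use of virtual robots in \texttt{Prepare()}).

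The main obstacle I anticipate is the passage from the uncapacitated Minimax Grid Matching statement to the capacitated/coarse-grained version and making the scaling rigorous: Theorem~\ref{t:minimax} as quoted is about matching $m^2$ uniform points to the $m^2$ integer grid points in an $m\times m$ square, whereas here we need to match $n$ uniform points to $\frac{m_1}{3}\times\frac{m_2}{3}$ sites of capacity $3$. The clean fix is the splitting trick noted above together with the observation that the uniform measure on $\mathcal G$ restricted to and renormalized over the coarsened array is uniform up to $O(1/m_1)$ corrections—so one should either invoke a robust form of Theorem~\ref{t:minimax} that tolerates $(1+o(1))$-uniform densities, or simply note that the integer-grid lattice points sit inside the cells so that an exact-uniform coupling argument applies. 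Everything else (the deviation being $o(m_1)$, the handling of boundary strips) is routine once this reduction is in place.
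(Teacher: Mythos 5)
Your overall route is exactly the paper's: coarse-grain the grid into $3\times 3$ cells, invoke the Minimax Grid Matching theorem (Theorem~\ref{t:minimax}) to move each robot $O(\log^{3/4} m_1) = o(m_1)$ into its assigned cell, and dispose of divisibility and the discrete-vs-continuous issue (the paper handles the latter by viewing the discrete placement as a continuous uniform placement followed by scaling by three and rounding, and handles rectangles via Theorem~3 of the cited reference, just as you suggest).

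The one genuine gap is the step you yourself flag as the main obstacle: your proposed reduction from the capacity-$3$ matching to Theorem~\ref{t:minimax} does not work as stated. Splitting each capacity-$3$ cell into three colocated unit sites reformulates the problem as an uncapacitated bipartite bottleneck matching, but the resulting target set ($\frac{m_1m_2}{9}$ locations, each with multiplicity three) is not the $m^2$ distinct lattice points of an $m\times m$ square, so Theorem~\ref{t:minimax} cannot be applied to it as a black box; nor is the induced measure argument you sketch a substitute for this. The paper's fix splits the \emph{robots} rather than the sites: partition the $n$ robots arbitrarily into three groups of $\frac{m_1m_2}{9}$; each group is itself a uniform sample, so the theorem matches each group to the cell centers, one robot per cell, with maximum displacement $O(\log^{3/4} m_1)$ with high probability; a union bound over the three groups and overlaying the three matchings yields exactly three robots per cell. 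The overlay is harmless precisely because the robots are treated as indistinguishable in this phase, so the three path systems can be superimposed (and even shortened) without increasing any robot's travel. With that substitution your argument closes; the rest is correct.
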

\ifarxiv
\begin{proof}
We prove for the case of $m_1 = m_2 = 3m$ using the minimax grid matching theorem (Theorem~\ref{t:minimax}); generalization to $m_1 \ge m_2$ can be then seen to hold using the generalized version of Theorem~\ref{t:minimax} that applies to rectangles (Theorem 3 of \cite{leighton1989tight}, which in fact applies to arbitrarily simply connected region within a square region).

Now let $m_1 = m_2 = 3m$. We may view a random configuration of $m^2$ robots on a $3m\times 3m$ grid as randomly placing $m^2$ continuous points in an $m\times m$ square with scaling (by three in each dimension) and rounding. 
By Theorem~\ref{t:minimax}, a random configuration of $m^2$ continuous points in an $m \times m$ square can be moved to the $m^2$ grid points at the center of the $m^2$ disjoint unit squares within the $m \times m$ square, where each point is moved by a distance no more than $O(\log^{\frac{3}{4}}m)$, with high probability. 
Translating this back to a $3m \times 3m$ gird, we have that $m^2$ randomly distributed robots on the grid can be moved so that each $3\times 3$ cell contains exactly one robot and the maximum distance moved for any robot is no more than $O(\log^{\frac{3}{4}}m)$, with high probability. Applying this argument three times yields that a random configuration of $\frac{m_1^2}{3}$ robots on an $m_1\times m_1$ gird can be moved so that each $3\times 3$ cell contains exactly three robots and no robot needs to move more than a $O(\log^{\frac{3}{4}}{m_1})$ steps, with high probability. We note that, because the robots are indistinguishable, overlaying three sets of reconfiguration paths will not cause an increase in the distance traveled by any robot (and will reduce it). \end{proof}
\fi
In the example, anonymous reconfiguration corresponds to Fig.~\ref{fig:example}(a)$\to$Fig.~\ref{fig:example}(b) and Fig.~\ref{fig:example}(f)$\to$Fig.~\ref{fig:example}(e) (note that \mpp solutions are time-reversible). 
We simulated the process of anonymous reconfiguration for $m_1 = m_2 = 300$, i.e., on a $300 \times 300$ grids. For $\frac{1}{3}$ robot density, the actual number of steps, averaged over $100$ random instances, is less than $5$.
We call configurations like  Fig.~\ref{fig:example}(b)-(e), which have all robots concentrated vertically or horizontally in the middle of the $3\times 3$ cells, \emph{centered balanced configurations} or simply \emph{centered configurations}. 
Completing the first phase requires solving two unlabeled \mpp problems \cite{YuLav12CDC,Ma2016OptimalTA}, easily doable in polynomial time. 

In the second phase, \rta is applied with a highway heuristic to get us from Fig.~\ref{fig:example}(b) to Fig.~\ref{fig:example}(e), transforming between vertical centered configurations and horizontal centered configurations. 
To do so, \rta is applied (e.g., to Fig.~\ref{fig:example}(b) and (e)) to obtain two intermediate configurations (e.g., Fig.~\ref{fig:example}(c) and (d)).
To go between these configurations, e.g., Fig.~\ref{fig:example}(b)$\to$Fig.~\ref{fig:example}(c), we apply a heuristic by moving robots that need to be moved out of a $3\times 3$ cell to the two sides of the middle columns of Fig.~\ref{fig:example}(b), depending on their target direction. If we do this consistently, after moving robots out of the middle columns, we can move all robots to their desired goal $3\times 3$ cell without stopping nor collision. 
Once all robots are in the correct $3\times 3$ cells, we can convert the balanced configuration to a centered configuration in at most $3$ steps, which is necessary for carrying out the next simulated row/column shuffle. 
Adding things up, we can simulate a shuffle operation using no more than $m + 5$ steps where $m = m_1$ or $m_2$. 
The efficient simulated shuffle leads to low makespan \mpp routing algorithms. It is clear that all operations take polynomial time; a precise running time is given at the end of this subsection.

\begin{theorem}[Makespan Upper Bound for Random \mpp, $\le \frac{1}{3}$ Density]\label{t:rtm-ramdom}
For random \mpp instances on an $m_1 \times m_2$ grid, where $m_1 \ge m_2$ are multiples of three, for $n \le \frac{m_1m_2}{3}$ robots, an $m_1 + 2m_2 + o(m_1)$ makespan solution can be computed in polynomial time, with high probability. 
\end{theorem}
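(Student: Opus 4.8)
The plan is to assemble the bound directly from the two phases of the \rth algorithm, letting the randomness of the instance enter only through the anonymous-reconfiguration phase and treating everything after that as a deterministic worst case.

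First I would dispatch Phase~1 (anonymous reconfiguration). Apply Proposition~\ref{p:phase:1} once to the start configuration $S$ and once to the goal configuration $G$: with high probability there are balanced configurations $S'$ and $G'$ with $S$ within distance $o(m_1)$ of $S'$ and $G$ within distance $o(m_1)$ of $G'$, and a union bound keeps the combined failure probability $o(1)$. Because \mpp with indistinguishable robots on a grid is solvable in polynomial time with makespan bounded, up to a constant, by the largest individual travel distance \cite{YuLav12CDC,Ma2016OptimalTA}, the unlabeled reconfigurations $S \to S'$ and $G' \to G$ each cost $o(m_1)$ steps; since the robots are indistinguishable, overlaying the three sets of reconfiguration paths used in the argument of Proposition~\ref{p:phase:1} does not inflate any robot's travel. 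A final conversion of each balanced endpoint into a centered configuration costs at most $3$ more steps on each side. This reduces the problem to a labeled \mpp between two centered configurations, at an additive cost of $o(m_1)$.

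Next I would bound Phase~2 (labeled Rubik Table with highway-simulated shuffles). Interpreting the $3\times 3$-cell grid as the underlying table, padding with virtual robots when $n < \tfrac{m_1 m_2}{3}$, and giving each robot the color of the cell-column containing its goal, the centered-to-centered reconfiguration is an instance of the labeled \rtp. By Theorem~\ref{t:rta} it is solved by one round of column shuffles followed by two rounds of row shuffles (the colored sort, then a final within-row label sort). A column-shuffle round moves robots vertically inside width-$3$ strips and so spans the $m_1$ direction, while a row-shuffle round moves robots horizontally inside height-$3$ strips and spans the $m_2$ direction; since $m_1 \ge m_2$, this ordering places the cheaper direction in the doubled slot. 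Each round, together with the at-most-$3$-step re-centering (and change of orientation between rounds) that prepares the next, is realized in the grid by the highway primitive in at most $m + 5$ steps, where $m \in \{m_1, m_2\}$ is the spanned dimension, as argued in the algorithm description. Hence Phase~2 costs at most $(m_1 + 5) + (m_2 + 5) + (m_2 + 5) = m_1 + 2m_2 + 15$ steps, deterministically.

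Summing the two phases gives makespan $o(m_1) + (m_1 + 2m_2 + 15) + o(m_1) = m_1 + 2m_2 + o(m_1)$, with high probability, and every ingredient---solving the two unlabeled \mpp problems, computing the Rubik-table matchings, and running the highway simulation---is polynomial time, which is the claim. The main obstacle I expect is the correctness and the $m+5$-step guarantee of the highway primitive: one must verify that when all robots that must leave their $3\times 3$ cell are simultaneously pushed onto the side lanes of the width-$3$ strips by a direction-consistent rule (up-movers to one lane, down-movers to the other), the resulting flows are collision-free, never overflow a lane or a destination cell, and terminate within $m + O(1)$ steps---this is exactly where the $\le \tfrac{1}{3}$ density is essential, since it leaves every $3\times 3$ cell at least six free vertices. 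A secondary point needing care is that consecutive rounds compose, i.e., that the centered configuration left by one simulated shuffle is a legitimate starting point for the next.
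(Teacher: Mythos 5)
Your proposal is correct and follows essentially the same route as the paper: Proposition~\ref{p:phase:1} plus a distance-to-makespan conversion handles the anonymous reconfiguration in $o(m_1)$ steps with high probability, and three rounds of parallel simulated shuffles (one spanning $m_1$, two spanning $m_2$, each costing $m+O(1)$ steps) give the deterministic $m_1+2m_2+O(1)$ term. The only quibble is that the actual phase order is row-shuffles, column-shuffles, row-shuffles (prepare, colored sort, label sort) rather than column-then-row-then-row as you state, but this does not change the arithmetic or the bound.
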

\ifarxiv
\begin{proof}
By Proposition~\ref{p:phase:1}, anonymous reconfiguration requires distance $o(m_1)$ with high probability. By Theorem 1 from \cite{yu2018constant}, this implies that a plan can be obtained for anonymous reconfiguration that requires $o(m_1)$ makespan.
For the second phase of \mpp resolution with Rubik Table and highway heuristics, by Theorem~\ref{t:rta}, we need to perform $m_1$ parallel row shuffles with row width of $m_2$, followed by $m_2$ parallel column shuffles with column width of $m_1$, followed by another $m_1$ parallel row shuffles with row width of $m_2$. Simulating these shuffles require $m_1 + 2m_2 + O(1)$ steps. All together, a makespan of $m_1 + 2m_2 + o(m_1)$ is required, with very high probability. 
\end{proof}
\fi

Contrasting Theorem~\ref{t:rtm-ramdom} and Proposition~\ref{p:makespan-lower} yields

\begin{theorem}[Makespan Optimality for Random \mpp, $\le \frac{1}{3}$ Density]\label{t:rth-ratio}
For random \mpp instances on an $m_1 \times m_2$ grid, where $m_1 \ge m_2$ are multiples of three, for $n = cm_1m_2$ robots with $c \le \frac{1}{3}$, as $m_1 \to \infty$, a $(1 + \frac{m_2}{m_1 + m_2})$ makespan optimal solution can be computed in polynomial time, with high probability. 
\end{theorem}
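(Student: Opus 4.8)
The plan is to obtain Theorem~\ref{t:rth-ratio} as a direct corollary of the matching bounds already in hand, with essentially all the work being asymptotic bookkeeping. First I would fix a constant $c \le \tfrac13$ and set $n = c m_1 m_2$, so that $n = \Theta(m_1 m_2)$ and both Theorem~\ref{t:rtm-ramdom} and Proposition~\ref{p:makespan-lower} apply to the same family of random \mpp instances. Theorem~\ref{t:rtm-ramdom} then supplies an event $\mathcal E_1$, of probability tending to $1$, on which \rth (which in all cases runs in polynomial time) returns a solution of makespan at most $m_1 + 2m_2 + o(m_1)$; Proposition~\ref{p:makespan-lower} supplies an event $\mathcal E_2$, again of probability tending to $1$, on which every feasible solution of the instance has makespan at least $m_1 + m_2 - o(m_1)$. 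Since the failure probabilities are each super-polynomially small --- for $\mathcal E_2$ the proof of Proposition~\ref{p:makespan-lower} gives $e^{-\Omega(\sqrt{m_1}\,m_2)}$ --- a union bound shows $\Pr[\mathcal E_1 \cap \mathcal E_2] \to 1$ as $m_1 \to \infty$.

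On $\mathcal E_1 \cap \mathcal E_2$ I would then bound the ratio of the makespan produced by \rth to the optimal makespan by
\[
\frac{m_1 + 2m_2 + o(m_1)}{m_1 + m_2 - o(m_1)}.
\]
Because $m_1 \le m_1 + m_2 \to \infty$, each additive $o(m_1)$ term is also $o(m_1 + m_2)$, so dividing numerator and denominator by $m_1 + m_2$ rewrites this as
\[
\frac{\frac{m_1 + 2m_2}{m_1 + m_2} + o(1)}{1 - o(1)} = \frac{m_1 + 2m_2}{m_1 + m_2}\bigl(1 + o(1)\bigr) = \Bigl(1 + \frac{m_2}{m_1 + m_2}\Bigr)\bigl(1 + o(1)\bigr),
\]
and $m_1 \ge m_2$ forces $\frac{m_2}{m_1 + m_2} \le \tfrac12$. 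This is exactly the asserted $(1 + \frac{m_2}{m_1+m_2})$-makespan optimality, holding with high probability, and the polynomial running time is inherited verbatim from Theorem~\ref{t:rtm-ramdom}.

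The only delicate point --- the closest thing to an obstacle --- is ensuring the two $o(m_1)$ terms are genuinely uniform when $m_2 = m_2(m_1)$ is allowed to grow with $m_1$. For the upper bound I would trace the anonymous-reconfiguration overhead back through Proposition~\ref{p:phase:1} and the minimax grid matching bound of Theorem~\ref{t:minimax} to an explicit $O(\mathrm{polylog}\, m_1)$ term, and for the lower bound I would use the explicit choice $\alpha = m_1^{-1/8}$ from Proposition~\ref{p:makespan-lower}, which yields slack $O(m_1^{7/8})$. Both are $o(m_1)$ irrespective of how $m_2 \le m_1$ scales, so the cancellation above is valid and the ``with high probability'' qualifier survives the union bound.
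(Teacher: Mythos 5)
Your proposal is correct and follows exactly the route the paper takes: the paper derives Theorem~\ref{t:rth-ratio} simply by ``contrasting'' the upper bound of Theorem~\ref{t:rtm-ramdom} with the lower bound of Proposition~\ref{p:makespan-lower}, which is precisely your union-bound-plus-ratio argument. Your added care about the uniformity of the $o(m_1)$ terms and the super-polynomially small failure probabilities is a reasonable filling-in of details the paper leaves implicit, but it is not a different approach.
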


Since $m_1\ge m_2$, $1 + \frac{m_2}{m_1 + m_2} \in (1, 1.5]$. In other words, in polynomial running time, \rth achieves $(1 + \delta)$ asymptotic makespan optimality for $\delta \in (0, 0.5]$, with high probability. %To our knowledge, \rth is the first ever polynomial time algorithm to guarantee sub-1.5 optimality for \mpp. 

From the analysis so far, if $m_1$ and/or $m_2$ are not multiples of $3$, it is clear that all results in this subsection continue to hold for robot density $\frac{1}{3} - \frac{(m_1\mod 3)(m_2\mod 3)}{m_1m_2}$, which is arbitrarily close to $\frac{1}{3}$ for large $m_1$. It is also clear that the same can be said for grids with certain patterns of regularly distributed obstacles (Fig.~\ref{fig:jd_center}(b)), i.e., 

\begin{corollary}[Random \mpp, $\frac{1}{9}$ Obstacle and $\frac{2}{9}$ Robot Density]
For random \mpp instances on an $m_1 \times m_2$ grid, where $m_1 \ge m_2$ are multiples of three and there is an obstacle at coordinates $(3k_1 + 2, 3k_2 + 2)$ for all applicable $k_1$ and $k_2$, for $n = cm_1m_2$ robots with $c \le \frac{2}{9}$, a solution can be computed in polynomial time that has makespan $m_1 + 2m_2 + o(m_1)$ with high probability.
As $m_1 \to \infty$, the solution approaches $1 + \frac{m_2}{m_1 + m_2}$ optimal, with high probability. 
\end{corollary}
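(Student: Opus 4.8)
The plan is to replay the argument behind Theorem~\ref{t:rtm-ramdom} and Theorem~\ref{t:rth-ratio} essentially line by line, the only structural change being that placing an obstacle at the center $(3k_1+2,3k_2+2)$ of every $3\times 3$ cell deletes exactly one vertex from that cell's middle column and one from its middle row. A \emph{vertical centered configuration} therefore places robots only at the top-middle and bottom-middle vertices of each cell (the vertices at rows $\equiv 1$ or $0 \pmod 3$ in the middle column, which has index $\equiv 2 \pmod 3$), and symmetrically for a horizontal one; hence each cell holds at most two robots and a \emph{balanced configuration} is redefined to have at most two robots per $3\times 3$ cell, which is exactly why the density drops from $\tfrac13$ to $\tfrac29$. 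The one geometric fact to record up front is that every grid row and every grid column whose index is $\not\equiv 2\pmod 3$ is entirely obstacle-free, and a column of index $\equiv 2\pmod 3$ carries obstacles only at rows $\equiv 2\pmod 3$, i.e.\ one obstacle every third cell.

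First I would re-prove the analogue of Proposition~\ref{p:phase:1}: with high probability a random configuration of $n\le\tfrac{2m_1m_2}{9}$ robots lies within distance $o(m_1)$ of a balanced (at-most-two-per-cell) configuration. This is the minimax grid matching bound (Theorem~\ref{t:minimax}) used exactly as in Proposition~\ref{p:phase:1}, except that the robot set is split into two sub-configurations and the base matching argument is applied twice rather than three times; overlaying the two resulting sets of anonymous reconfiguration paths does not increase any robot's displacement. By Theorem~1 of \cite{yu2018constant}, whose reconfiguration routine is local and hence unaffected by a sparse, regular obstacle pattern, the two unlabeled reconfigurations (start $\to$ balanced and goal $\to$ balanced, both time-reversible) are then realizable in $o(m_1)$ makespan, w.h.p., and converting a balanced configuration to a centered one still costs $O(1)$ steps per cell since each cell retains the required free vertices.

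Second I would run the second phase (\mpp resolution) as in Theorem~\ref{t:rtm-ramdom}: invoke the labeled Rubik Table result (Theorem~\ref{t:rta}) and realize its three parallel shuffle rounds (row, then column, then row) with the highway heuristic. Every simulated row shuffle now routes robots along grid rows of index $\not\equiv 2\pmod 3$, which are obstacle-free, so it still costs $m_2 + O(1)$ steps; every simulated column shuffle moves robots up and down grid columns of index $\equiv 2\pmod 3$, using the two neighbouring obstacle-free columns as maneuvering lanes to step around the one-in-three obstacles, and still costs $m_1 + O(1)$ steps. Summing, the second phase costs $m_2 + m_1 + m_2 + O(1)$, and the whole algorithm runs in polynomial time with makespan $o(m_1) + m_1 + 2m_2 + O(1) = m_1 + 2m_2 + o(m_1)$, w.h.p. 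For the lower bound, Proposition~\ref{p:makespan-lower} applies verbatim: it uses only the Manhattan distance between two opposite corner sub-grids (which obstacles can only lengthen, never shorten, any path against) together with the fact that there are $\Theta(m_1m_2)$ robots, both of which still hold. Dividing the two bounds gives the claimed $\tfrac{m_1+2m_2}{m_1+m_2} = 1+\tfrac{m_2}{m_1+m_2}$ asymptotic optimality as $m_1\to\infty$.

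The step I expect to demand the most care is the column-shuffle simulation: the line-shuffle primitives of Lemmas~\ref{l:line-shuffle} and~\ref{l:fast-line-shuffle} assume unobstructed $2\times 3$/$3\times 2$ blocks, so one must show that for a grid column missing every third vertex the required permutation of the (at most $\tfrac{2m_1}{3}$) robots in it can still be completed with only a constant additive overhead --- for instance by first sliding all these robots onto the two flanking obstacle-free columns, performing an ordinary line shuffle in that unobstructed strip, and sliding them back. Once this local claim is settled --- a bounded case analysis analogous to Lemma~\ref{l:reconfigure} --- everything else is bookkeeping identical to Theorems~\ref{t:rtm-ramdom} and~\ref{t:rth-ratio}, and parallelism across cells is immediate since the maneuvering strips around distinct occupied columns are pairwise disjoint.
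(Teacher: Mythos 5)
Your proposal is correct and follows essentially the same route as the paper, which states this corollary without a separate proof as an immediate consequence of the \rth analysis (the side columns used as highways are untouched by the centered obstacles, and the per-cell capacity of a centered configuration drops from three to two, giving the $\frac{2}{9}$ density). Your closing worry about the $3\times 2$ swap gadgets is moot for \rth, since the simulated shuffles at this density are realized by the highway flow along the obstacle-free flanking lanes rather than by the dense line-shuffle primitives of Lemmas~\ref{l:line-shuffle}--\ref{l:fast-line-shuffle}, which is exactly the resolution you arrive at.
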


\textbf{Arbitrary \mpp instances.}
We now examine applying \rth to arbitrary \mpp instances under $\frac{1}{3}$ robot density.
If an \mpp instance is arbitrary, all that changes to \rth is the makespan it takes to complete the anonymous reconfiguration phase. On an $m_1\times m_2$ grid, by computing a matching, it is straightforward to show that it takes no more than $m_1 + m_2$ steps to complete the anonymous reconfiguration phase, starting from an arbitrary start configuration.  Since two executions of anonymous reconfiguration are needed, this adds $2(m_1 + m_2)$ additional makespan. Therefore, we have

\begin{theorem}[Arbitrary \mpp, $\le \frac{1}{3}$ Density]
For arbitrary \mpp instances on an $m_1 \times m_2$ grid, $m_1 \ge m_2$, for $n \le \frac{m_1m_2}{3}$ robots, a $3m_1 + 4m_2 + o(m_1)$ makespan solution can be computed in polynomial time. This implies that, for $n = cm_1m_2 \le \frac{m_1m_2}{3}$ robots, a polynomial time can compute an asymptotic $3 + \frac{m_2}{m_1+m_2}$ makespan optimal solution, with high probability. 
\end{theorem}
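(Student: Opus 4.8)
The plan is to reuse the three-phase structure of \rth verbatim and re-analyze only its two anonymous reconfiguration phases, since these are the only places where randomness of the instance entered the analysis behind Theorem~\ref{t:rtm-ramdom}. Recall \rth (i) anonymously reconfigures the start configuration to a balanced configuration, (ii) runs the Rubik-Table-with-highway phase to move between the balanced start and the balanced goal, and (iii) anonymously reconfigures the balanced goal to the actual goal (the time-reverse of an anonymous reconfiguration). Step (ii) operates entirely between balanced configurations, and its cost bound $m_1 + 2m_2 + O(1)$ follows only from Theorem~\ref{t:rta} and the $(m+O(1))$-step simulated highway shuffle, with no dependence on how the balanced configurations arose. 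So the whole argument reduces to bounding the cost of step (i) (equivalently step (iii)) for an \emph{arbitrary} start configuration.

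For that, I would proceed as follows. The grid contains $\tfrac{m_1m_2}{9}$ cells of size $3\times 3$, each of capacity $3$, for a total capacity $\tfrac{m_1m_2}{3}\ge n$; hence a feasible assignment of the $n$ unlabeled robots to cells respecting the per-cell capacity exists, and can be computed in polynomial time as an integral optimum of a transportation LP (or a capacitated bipartite matching). Choosing such an assignment so that displacements stay short --- for instance, first balancing the per-column-band robot counts to at most $m_1$ each by a one-dimensional transportation step that shifts surplus robots horizontally toward the nearest deficient bands, then settling robots vertically within each $m_1\times 3$ band --- yields a target balanced configuration together with reconfiguration paths of length at most the $L_1$-diameter $(m_1-1)+(m_2-1)<m_1+m_2$. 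Because the density is at most $\tfrac13$, these unlabeled paths can be scheduled collision-free within $m_1+m_2$ steps using the grid-routing primitive of \cite{yu2018constant} (Theorem~1 there) --- the same primitive that yields the $o(m_1)$ bound in the random case, now invoked with displacement as large as the grid diameter; the slack left by $\tfrac13$ density is what lets crossing and congestion conflicts be absorbed with at most additive-constant overhead. When $m_1$ or $m_2$ is not a multiple of three, the same minor bookkeeping used around Theorem~\ref{t:rtm-ramdom} applies and does not change the leading order.

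Summing up, \rth on an arbitrary instance costs at most $2(m_1+m_2)$ for the two anonymous reconfigurations plus $m_1+2m_2+O(1)$ for the middle phase, i.e.\ $3m_1+4m_2+O(1)$, which is in particular $3m_1+4m_2+o(m_1)$; and every component (transportation LP, unlabeled routing, \rta, highway simulation) runs in polynomial time. For the optimality ratio, note that an arbitrary instance carries no nontrivial makespan lower bound, so the ratio claim is to be read against random instances: for $n=cm_1m_2$ with $c\le\tfrac13$, Proposition~\ref{p:makespan-lower} supplies a lower bound $m_1+m_2-o(m_1)$ with high probability, and dividing the upper bound by it gives an asymptotic ratio $\tfrac{3m_1+4m_2}{m_1+m_2}=3+\tfrac{m_2}{m_1+m_2}$ as $m_1\to\infty$, with high probability.

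The main obstacle is the second paragraph: making rigorous that anonymous reconfiguration from a \emph{possibly adversarially clustered} start configuration --- say, all $n$ robots packed into one corner and forced to disperse --- still finishes within the grid diameter $m_1+m_2$. This needs the unlabeled routing primitive to tolerate the congestion created as a dense blob spreads out; the decisive leverage is the $\tfrac13$ density cap, which guarantees enough free cells for the shortest-path routes to be de-conflicted, via local swaps or highway-style staggering, without lengthening the schedule by more than a constant.
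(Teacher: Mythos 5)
Your proposal takes essentially the same route as the paper: the paper likewise keeps the three-phase \rth structure intact, argues via a matching that each anonymous reconfiguration phase of an arbitrary instance costs at most $m_1+m_2$ steps, adds $2(m_1+m_2)$ to the $m_1+2m_2+o(m_1)$ middle phase, and divides by the random-instance lower bound of Proposition~\ref{p:makespan-lower} to get the $3+\frac{m_2}{m_1+m_2}$ ratio. The only difference is one of detail: the paper simply asserts that the matching-based anonymous reconfiguration ``is straightforward'' to schedule in $m_1+m_2$ steps, whereas you correctly flag the collision-free scheduling of diameter-length unlabeled paths under congestion as the step needing care --- so your write-up is, if anything, more candid about where the remaining work lies.
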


We now give the running time of \rtmapf and \rth. 

\begin{proposition}[Running Time, \rth]\label{p:time}
For $n \le \frac{m_1m_2}{3}$ robots on an $m_1 \times m_2$ grid, \rth runs in $O(nm_1^2m_2)$ time.
\end{proposition}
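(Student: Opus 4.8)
The plan is to account for the running time phase by phase, matching the structure of Algorithm~\ref{alg:rubik} together with the anonymous reconfiguration step that precedes it in \rth, and to argue that each phase is dominated by the cost of simulating the shuffles. First I would fix notation: there are $n \le \frac{m_1m_2}{3}$ robots on the $m_1\times m_2$ grid, and we fill the remaining cells with virtual robots so that the Rubik Table machinery applies to a full table; this does not change the asymptotic count since $m_1m_2 = \Theta(n)$ only when $n=\Theta(m_1m_2)$, so strictly I should keep $m_1m_2$ and $n$ separate and observe $m_1 m_2 = O(n)$ does \emph{not} hold in general — instead I will bound everything by $O(n m_1^2 m_2)$ directly, treating $n$ as the number of \emph{real} robots whose paths we must track and $m_1 m_2$ as the table size. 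The dominant terms will come from the three rounds of simulated shuffles.

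The key steps, in order, are: (1) \textbf{Anonymous reconfiguration.} Computing the target balanced configuration and the unlabeled \mpp plan (two instances of unlabeled/anonymous \mpp) is polynomial; a crude bound is $O(\mathrm{poly}(m_1 m_2))$, and I would cite the standard max-flow / matching-based unlabeled \mpp solvers \cite{YuLav12CDC,Ma2016OptimalTA} to get something like $O((m_1m_2)^{2.5})$ or better, then note this is subsumed by the shuffle cost below. (2) \textbf{\texttt{Prepare} (building $B(T,R)$ and its $m_2$ perfect matchings).} The bipartite graph has $O(m_1)$ vertices per side and $O(m_1m_2)$ edges; decomposing it into $m_2$ perfect matchings via Hall's theorem / repeated matching costs at most $O(m_2 \cdot m_1^{2.5})$ or so — again polynomial and, I expect, dominated. (3) \textbf{The three rounds of simulated shuffles.} This is the real cost driver: each round simulates up to $m_1$ (or $m_2$) parallel line shuffles, each line shuffle being a faster-line-shuffle (Lemma~\ref{l:fast-line-shuffle}) or highway-shuffle routine of length $O(m_1)$ on a line of $O(m_1)$ cells. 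Producing the explicit step-by-step paths for all robots in one shuffle costs $O(m_1 \cdot m_1) = O(m_1^2)$ per parallel round in the worst case (length of the schedule times number of robots moved), hence $O(m_1^2 m_2)$ for the $\Theta(m_2)$-many parallel lines; but since we only need to track the $n$ real robots, the per-round cost is $O(n \cdot m_1)$, giving $O(n m_1 \cdot (m_1+m_2)) = O(n m_1^2)$ over the three rounds, and folding in the $m_2$ column-shuffle round length gives the stated $O(n m_1^2 m_2)$.

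I would then assemble these: the total is the max of the reconfiguration cost, the matching-decomposition cost, and the shuffle-simulation cost, and I would argue (or simply choose the crudest safe bound) that $O(n m_1^2 m_2)$ dominates, which matches the deterministic $O(m_1^2 m_2)$ running time of \rta (Section~\ref{sec:pre}) scaled by the factor $n$ coming from explicitly materializing each robot's path at each of the $O(m_1+m_2)$ time steps. The main obstacle I anticipate is bookkeeping the simulated-shuffle cost honestly: the highway heuristic moves robots out of $3\times 3$ cells, along middle columns, and back in, so one must check that generating these paths for all robots in a parallel round is indeed $O(n m_1)$ and not something larger (e.g.\ that collision-avoidance along the highway does not force a super-linear schedule). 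A secondary subtlety is making sure the anonymous-reconfiguration and matching steps are genuinely subsumed rather than sneakily larger; if any of them exceeds $O(n m_1^2 m_2)$ I would need to either tighten its analysis or revise the claimed bound. Given the paper defers full proofs to \cite{guo2022sub}, a clean way to present this is to state the phase-by-phase bounds, note the shuffle phase dominates, and conclude.
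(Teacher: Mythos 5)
Your phase decomposition is sensible, but the accounting misattributes where the $O(nm_1^2m_2)$ actually comes from, and the step that produces your headline bound is not a valid derivation. You claim the simulated shuffles are ``the real cost driver'' and get from $O(nm_1(m_1+m_2)) = O(nm_1^2)$ to $O(nm_1^2m_2)$ by ``folding in the $m_2$ column-shuffle round length'' --- but there is nothing to fold in: the $m_2$ parallel column shuffles run simultaneously, each robot participates in only one of them, and over all three rounds each robot's explicit path has length $O(m_1+m_2)$. Writing down all paths therefore costs $O(n(m_1+m_2))$ (or $O(m_1m_2(m_1+m_2))$ if you simulate every cell at every step), which is far below $O(nm_1^2m_2)$. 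So the shuffle phase does not dominate, and your derivation of the stated bound from it is arithmetic sleight of hand.

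The paper's proof locates the bottleneck exactly where you wave it away: the two anonymous (unlabeled) \mpp instances in the reconfiguration phase, solved by max-flow on a time-expanded graph with $O(m_1m_2T)$ edges and expansion horizon $T=O(m_1+m_2)$; Ford--Fulkerson with $n$ units of flow then costs $O(n\cdot m_1m_2T)=O(nm_1^2m_2)$, which is the claimed bound. Your substitute bound of ``$O((m_1m_2)^{2.5})$ or better'' for this phase is not obviously subsumed by $O(nm_1^2m_2)$ when $n$ is well below $\frac{m_1m_2}{3}$ (the proposition is stated for all $n\le\frac{m_1m_2}{3}$), so even as a crude upper bound your assembly does not close. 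The matching decomposition you handle acceptably (the paper cites $O(m_1^2m_2)$ deterministic, which is dominated). To repair the proof, replace your step (1) with the explicit time-expanded max-flow analysis and demote step (3) to a lower-order term.
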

\ifarxiv
\begin{proof}
The running time of \rtmapf and \rth are dominated by the matching computation and solving anonymous \mpp. 
The matching part takes $O(m_1^2m_2)$ in deterministic time or $O(m_1m_2\log m_1)$ in expected time \cite{goel2013perfect}. 
Anonymous \mpp may be tackled using the max-flow algorithm \cite{ford1956maximal} in $O(nm_1m_2T)=O(nm_1^2m_2)$ time, where $T=O(m_1+m_2)$ is the expansion  time  horizon of a time-expanded graph that allows a routing plan to complete. 
\end{proof}
\fi

\subsection{Reducing Makespan via Optimizing Matching}\label{subsec:heuristics}
Based on \rta, \rth has three simulated shuffle phases. Eventually, the makespan is dominated by the robot that needs the longest time to move, as a sum of moves for the robot in all three phases. As a result, the optimality of Rubik Table  methods is determined by the first preparation phase. The matchings determine the intermediate states in all three phases. 
Finding arbitrary perfect matchings is fast but the process can be improved to reduce the overall makespan. 

For improving matching, we propose two heuristics; the first is based on \emph{integer programming} (IP). 
We create binary variables $\{x_{ri}\}$ where $r$ represents the row number and $i$ the robot. 
robot $i$ is assigned to row $r$ if $x_{ri}=1$. 
Define single robot cost as $C_{ri}(\lambda)=\lambda |r-s_i.x|+(1-\lambda)|r-g_i.x|$. 
We optimize the makespan lower bound of the first phase by letting $\lambda=0$ or the third phase by letting $\lambda=1$. 
The objective function and constraints are given by
\begin{equation}
\label{eq:objective}
    \max_{r,i} \{{C_{ri}(\lambda=0)x_{ri}}\}+\max_{r,i}\{C_{ri}(\lambda=1)x_{ri}\}
\end{equation}
\vspace{-2mm}
\begin{equation}
\label{eq:constraint1}
    \sum_{r}x_{ri}=1, \text{for each robot $i$}
\end{equation}
\vspace{-2mm}
\begin{equation}
\label{eq:constraint2}
% \sum_{g_i.y=t}x_{ri}\leq 1, {\small\text{for each row $r$ and goal column $t$}} 
    \sum_{g_i.y=t}x_{ri}\leq 1, {\small\text{for each row $r$ and each color $t$}}  %note here the goal column is the type
\end{equation}
\vspace{-1mm}
\begin{equation}
\label{eq:constraint3}
% \sum_{a.s_{x}=c}x_{ra}=1, \text{for each column $c$ and each row $r$}
\sum_{s_i.y=c}x_{ri}=1, \text{for each column $c$ and each row $r$}
\end{equation}

Eq. \eqref{eq:objective} is the summation of makespan lower bound of the first phase and the third phase. Note that the second phase cannot be improved through optimizing the matching.
Eq. \eqref{eq:constraint1} requires that robot $i$ be only present in one row.
Eq. \eqref{eq:constraint2} specifies that each row should contain robots that have different goal columns.
Eq. \eqref{eq:constraint3} specifies that each vertex $(r,c)$ can only be assigned to one robot. 
The IP model represents a general assignment problem which is NP-hard in general.
It has limited scalability but provides a way to evaluate how optimal the matching could be in the limit.

A second matching heuristic we developed is based on \emph{linear bottleneck assignment (LBA)} \cite{burkard2012assignment}, which takes polynomial time.
LBA differs from the IP heuristic in that the bipartite graph is weighted.
For the matching assigned to row $r$, the edge weight of the bipartite graph is computed greedily.
If column $c$ contains robots of color $t$, we add an edge $(c,t)$ and its edge cost is
\vspace{-1.5mm}
\begin{equation}
\vspace{-1.5mm}
    % C_{ij}=\min_{a.g_y=i}C_{ra}(\lambda=0)
    C_{ct}=\min_{g_i.y=t}C_{ri}(\lambda=0)
\end{equation}
We choose $\lambda=0$ to optimize the first phase. Optimizing the third phase ($\lambda=1$) would give similar results.
After constructing the weighted bipartite graph, an $O(\frac{m_1^{2.5}}{\log m_1})$ LBA algorithm \cite{burkard2012assignment} is applied to get a minimum bottleneck cost matching for row $r$. Then we remove the assigned robots and compute the next minimum bottleneck cost matching for next row. 
After getting all the matchings $\mathcal{M}_r$, we can further use LBA to assign $\mathcal{M}_r$ to a different row $r'$ to get a smaller makespan lower bound. The cost for assigning matching $\mathcal{M}_r$ to row $r'$ is defined as 
\vspace{-1.5mm}
\begin{equation}
\vspace{-1.5mm}
    C_{\mathcal{M}_rr'}=\max_{i\in\mathcal{M}_r}C_{r'i}(\lambda=0)
\end{equation}
The total time-complexity of using LBA heuristic for matching is $O(\frac{m_1^{3.5}}{\log m_1})$.

We denote \rth with IP and LBA heuristics as \rthip and \rthlba, respectively.
We mention that \rtmapf, which uses the line swap motion  primitive, can also benefit from these heuristics to re-assign the goals within each group. This can lower the bottleneck path length and improve the optimality.

\section{Near-Optimally Solving \mpp at Half Robot Density}\label{sec:1:3}
The key design philosophy behind \rtmapf and \rth is to effectively simulate row/column shuffles. With this in mind, we further explored the case of $\frac{1}{2}$ robot density. 
Using a more sophisticated shuffle routine, $\frac{1}{2}$ robot density can be supported while retaining most of the guarantees for the $\frac{1}{3}$ density setting; obstacles are no longer supported. 

To best handle $\frac{1}{2}$ robot density, we employ a new shuffle routine called \emph{linear merge}, based on merge sort, and denote the resulting algorithm as Rubik Table with Linear Merge heuristics or \rtm. 
The basic idea behind linear merge (shuffle) is straightforward: for $m$ robots on a $2 \times m$ grid,  we iteratively sort the robots first on $2\times 2$ grids, then $2\times 4$ grids, and so on, much like how merge sort works. An illustration of the process on a $2 \times 8$ grid is shown in Fig.~\ref{fig:merge}.

\begin{figure}[h!]
\vspace{-2mm}
\centering
    \begin{overpic}[width=\linewidth]{./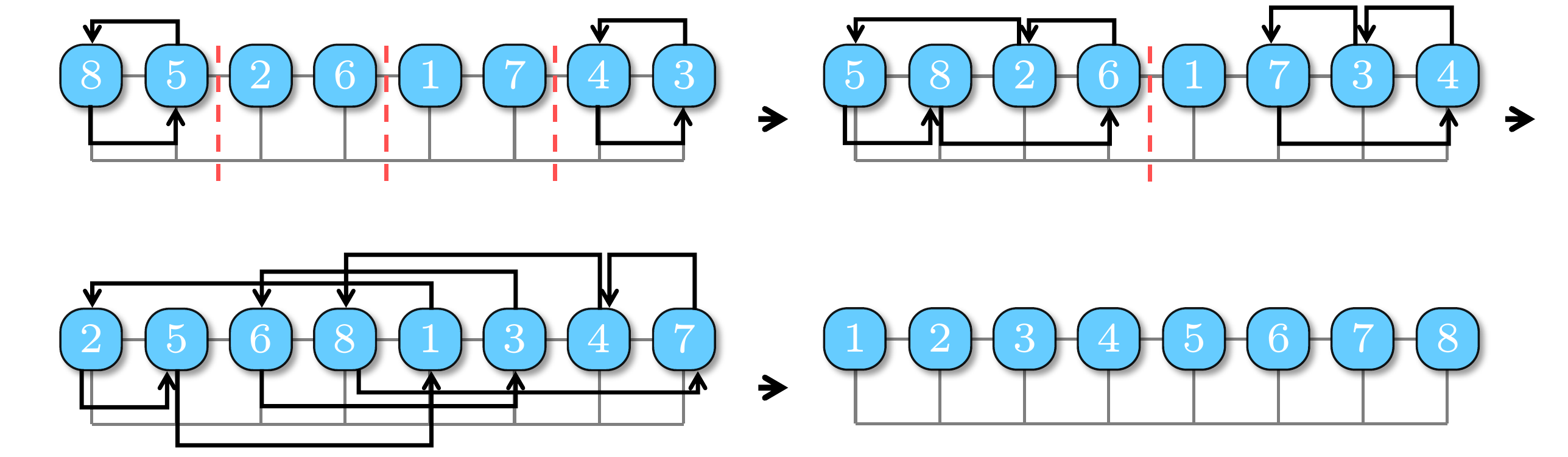}
             \footnotesize
             \put(22.8, 14.2) {(a)}
             \put(71.5, 14.2) {(b)}
             \put(22.8, -2.5) {(c)}
             \put(71.5, -2.5) {(d)}
    \end{overpic}
    \vspace{-2mm}
    \caption{A demonstration of the linear merge shuffle primitive on a $2\times 8$ grid. Robots going to the left always use the upper channel while robots going to the right always use the lower channel.} 
    \label{fig:merge}
    \vspace{-4mm}
\end{figure}

We now show that linear merge is always feasible and has the desired properties. 

\begin{lemma}[Properties of Linear Merge]\label{l:lm}
On a $2\times m$ grid, $m$ robots, starting on the first row, can be arbitrarily ordered using $m + o(m)$ steps. The motion plan can be computed in polynomial time. 
\end{lemma}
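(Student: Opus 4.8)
The plan is to formalize the recursive merge described in Fig.~\ref{fig:merge} and track two quantities: feasibility (no collisions, robots stay within the $2\times m$ strip) and the step count. I would proceed by induction on the levels of the merge. At level $k$ the strip is partitioned into blocks of width $2^k$, and the invariant maintained at the \emph{end} of level $k$ is that within each such block the robots occupy the first row in exactly the relative order they will need among themselves for the final target permutation (i.e.\ each block is ``internally sorted'' with respect to the global target). Level $0$ holds trivially. The inductive step is a single \emph{merge} of two adjacent internally-sorted blocks of width $2^{k}$ into one internally-sorted block of width $2^{k+1}$.

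The heart of the argument is the merge step, and this is where the two-channel idea in the figure does the work. Think of the target order restricted to a width-$2^{k+1}$ block as a sequence; each robot currently in one of the two sub-blocks knows whether, in the merged order, it must end up to the \emph{left} or to the \emph{right} of its current relative position. I would route every left-going robot through the top row and every right-going robot through the bottom row, shifting them one cell per step toward the side of the block they belong in; because all left-movers travel in the top channel in the same direction and all right-movers in the bottom channel in the same direction, within a channel the robots never need to pass one another (the order within a channel is already the order they need), so no swaps or head-on conflicts occur. Once each robot has reached the correct cell in its channel, a constant number of vertical moves drops everyone back onto the first row in the correct merged order. The total horizontal displacement any robot incurs during a single merge at level $k+1$ is at most $2^{k+1}$, and the vertical fix-up costs $O(1)$; hence merge at level $k+1$ costs $2^{k+1}+O(1)$ steps. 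Summing over $k=0,\dots,\lceil\log_2 m\rceil-1$ gives $\sum_k (2^{k+1}+O(1)) = 2m + O(\log m)$ — which is already within a factor $2$ of the claim but not yet $m+o(m)$.

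To get the sharper $m+o(m)$ bound I would not run the merges sequentially; instead I would \emph{pipeline} them, the way one can overlap passes of a sorting network. The key observation is that a robot that has already reached its final relative position within a small block can begin migrating toward its destination in the next-larger block without waiting for the rest of that block to settle — the top/bottom channel discipline makes these motions non-interfering across levels as well, provided one is careful that a robot entering a channel does so behind the robots already in that channel heading the same way. Under this pipelined schedule, each robot moves essentially monotonically toward its final column, so the number of steps is dominated by the largest single net displacement, which is at most $m$, plus lower-order terms coming from the $O(\log m)$ vertical re-seating operations and from startup/drain effects at the ends of the pipeline; this yields $m + O(\log m) = m + o(m)$. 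Feasibility of the pipelined version follows from the same channel-disjointness invariant, now stated across all active levels simultaneously. Finally, polynomial running time is immediate: computing, for each robot, its target relative position at every level is an $O(m\log m)$ bookkeeping computation, and the schedule has $O(m)$ time steps each touching $O(m)$ robots.

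\textbf{Main obstacle.} The delicate point is the pipelined schedule: proving that overlapping the $O(\log m)$ merge levels never creates a collision and that the makespan collapses from $2m$ to $m+o(m)$ requires a careful, uniform statement of the channel-occupancy invariant that holds across levels at once (who is allowed to enter the top/bottom channel, and when), together with an accounting argument that each robot's trajectory is monotone in the long dimension up to $o(m)$ slack. Getting the off-by-constant and $O(\log m)$ bookkeeping terms to genuinely be $o(m)$ — rather than, say, $\Theta(\log^2 m)$ from naive stacking of vertical re-seatings — is the part that needs the most care.
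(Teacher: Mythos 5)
Your setup of the recursive merge and your feasibility argument for a single merge (left-movers in one channel, right-movers in the other, relative order within each channel preserved) match the paper's proof. The gap is in the makespan accounting. You bound a robot's displacement during a level-$(k+1)$ merge by the width of the \emph{merged} block, $2^{k+1}$, which sums to $2m+O(\log m)$, and you then try to recover the factor of $2$ by pipelining the $O(\log m)$ merge levels — a construction you yourself flag as delicate and do not actually carry out. Proving collision-freedom when all levels are active simultaneously is a substantially harder claim than anything else in the lemma, and as written it is an unproven assertion, not a proof.

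The paper closes the gap without pipelining, by a sharper per-merge bound that you were one observation away from: when block $b$ is merged with its sibling $b'$, a robot in $b$ keeps its rank among the robots of $b$, so its final column in the merged block is its current column offset plus the number of robots of $b'$ inserted before it — at most $\mathrm{len}(b')$ horizontal steps, plus $2$ for moving down into the channel and back up. Since the sibling widths over the phases are $1,2,4,\dots,m/2$, the horizontal displacements telescope to $m-1$, and the vertical overhead is $2(\lceil\log m\rceil+1)$, giving $m+o(m)$ with the merges run strictly sequentially. I would recommend replacing the pipelining argument with this tighter accounting; it turns your $2m$ bound into the claimed one using only the sequential schedule whose feasibility you have already established.
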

\ifarxiv
\begin{proof}
We first show \emph{feasibility}. The procedure takes $\lceil \log m \rceil$ phases; in a phase, let us denote a section of the $2\times m$ grid where robots are treated together as a \emph{block}. For example, the left $2 \times 4$ grid in Fig.~\ref{fig:merge}(b) is a block. It is clear that the first phase, involving up to two robots per block, is feasible (i.e., no collision). Assuming that phase $k$ is feasible, we look at phase $k + 1$. We only need to show that the procedure is feasible on one block of length up to $2^{k+1}$. For such a block, the left half block of length up to $2^k$ is already fully sorted as desired, e.g., in increasing order from left to right. For the $k+1$ phase, all robots in the left half block may only stay in place or move to the right. For these robots that stay, they must be all at the leftmost positions of the half block and will not block motions of any other robot. For the robots that do need to move to the right, their relative orders do not need to change, and therefore will not cause collisions among themselves. Because these robots that move in the left half block will move down on the grid by one edge, they will not interfere with any robot from left from the right half block. 
Because the same arguments hold for the right half block (except the direction change), the overall process of merging a block occurs without collision. 

Next, we examine the \emph{makespan}. For any single robot $r$, at phase $k$, suppose it belongs to block $b$ and block $b$ is to be merged with block $b'$. It is clear that the robot cannot move more than $len(b') + 2$ steps, where $len(b')$ is the number of columns of $b'$ and the $2$ extra steps may be incurred because the robot needs to move down and then up the grid by one edge. This is because any move that $r$ needs to do is to allow robots from $b'$ to move toward $b$.
Because there are no collisions in any phase, adding up all the phases, no robot moves more than $m + 2(\log m + 1) = m + o(m)$ steps. 

Finally, it is clear that the merge sort-like linear merge shuffle primitive runs in $O(m\log m)$ time since it is a standard divide-and-conquer routine with $\log m$ phases. 
\end{proof}
\fi

With linear merge, the asymptotic properties of \rth for $\frac{1}{3}$ robot density mostly carries over to \rtm. 

\begin{theorem}[Random \mpp, $\frac{1}{2}$ Robot Density]
For random \mpp instances on an $m_1 \times m_2$ grid, where $m_1 \ge m_2$ are multiples of two, for $\frac{m_1m_2}{3} \le n  \le \frac{m_1m_2}{2}$ robots, a solution can be computed in polynomial time that has makespan $m_1 + 2m_2 + o(m_1)$ with high probability.
As $m_1 \to \infty$, the solution approaches an optimality of $1 + \frac{m_2}{m_1 + m_2} \in (1, 1.5]$, with high probability. 
\end{theorem}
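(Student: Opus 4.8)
The plan is to mirror the structure of the $\frac{1}{3}$-density argument (Theorem~\ref{t:rtm-ramdom} and Theorem~\ref{t:rth-ratio}), replacing the $3\times 3$-cell partition and the ``highway'' shuffle primitive by a $2\times 2$-cell partition together with the linear merge primitive of Lemma~\ref{l:lm}. Concretely, partition $\mathcal G$ into $2\times 2$ cells (using the hypothesis that $m_1,m_2$ are even), and call a configuration \emph{balanced} if every $2\times 2$ cell holds at most two robots. The first phase is again an \emph{anonymous reconfiguration} that moves the random start (resp.\ goal) configuration to a balanced configuration; the second phase applies \rta to the balanced configurations, simulating each row/column shuffle by linear merge rather than by the highway routine.

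First I would establish the analogue of Proposition~\ref{p:phase:1}: with high probability a random configuration of $n \le \frac{m_1m_2}{2}$ robots lies within distance $o(m_1)$ of a balanced configuration. This follows from the Minimax Grid Matching Theorem (Theorem~\ref{t:minimax}) exactly as in the $\frac{1}{3}$ case — view the $2m\times 2m$ grid as an $m\times m$ square of unit cells, match the $m^2$ scaled-and-rounded points to cell centers with displacement $O(\log^{3/4} m)$ w.h.p., and overlay two (here, up to two) such anonymous reconfigurations, which does not increase any robot's travel since the robots are unlabeled; the rectangular version (Theorem~3 of \cite{leighton1989tight}) handles $m_1 \ge m_2$. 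By Theorem~1 of \cite{yu2018constant}, an $o(m_1)$-distance anonymous reconfiguration is realizable with $o(m_1)$ makespan. Two executions (start$\to$balanced and balanced$\to$goal, using time-reversibility) cost $o(m_1)$ total.

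Next I would bound the second phase. After reaching balanced configurations I would funnel the (at most two) robots that must leave a $2\times 2$ cell onto the two rows (or columns) of a $2\times m$ strip, one direction per row, so that Lemma~\ref{l:lm} applies: each simulated shuffle of a line of length $m$ costs $m + o(m)$ steps, plus $O(1)$ steps to convert between a balanced cell layout and the ``stacked on one row'' layout that linear merge requires as input/output. By Theorem~\ref{t:rta}, \rta needs $m_1$ row shuffles, then $m_2$ column shuffles, then $m_1$ row shuffles — but crucially the row shuffles act on lines of width $m_2$ and the column shuffles on lines of width $m_1$, so the simulated cost is $m_2 + o(m_2)$ per row-shuffle round and $m_1 + o(m_1)$ for the column-shuffle round, giving $m_1 + 2m_2 + o(m_1)$ in total. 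Adding the $o(m_1)$ from phase one yields makespan $m_1 + 2m_2 + o(m_1)$ w.h.p. Comparing with the lower bound $m_1 + m_2 - o(m_1)$ from Proposition~\ref{p:makespan-lower} (which already covers $\Theta(m_1m_2)$ robots, hence $n$ between $\frac{m_1m_2}{3}$ and $\frac{m_1m_2}{2}$), the optimality ratio tends to $1 + \frac{m_2}{m_1+m_2} \in (1,1.5]$ as $m_1\to\infty$. Polynomial running time follows since anonymous \mpp is solvable by max-flow, the matchings are computable in polynomial time, and linear merge runs in $O(m\log m)$ per line.

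The main obstacle I anticipate is the feasibility of the second phase at density $\frac{1}{2}$: unlike the $\frac{1}{3}$ case, a $2\times 2$ cell that is (anonymously) full leaves no slack for robots from neighboring cells to pass through, so the routing that implements a simulated shuffle must be argued to be collision-free with essentially no spare vertices. This is exactly what the ``robots going left use the upper channel, robots going right use the lower channel'' discipline of Lemma~\ref{l:lm} is designed to guarantee, and the inductive merge argument there shows no robot ever needs more than one extra row of clearance; the care needed is in the interface steps — checking that converting a balanced $2\times 2$-cell configuration into the one-row-stacked input of linear merge, and back again, can always be done in $O(1)$ steps without collisions even when cells are full, and that consecutive shuffle rounds compose cleanly. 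Once those interface lemmas are in place, the makespan accounting and the optimality ratio are immediate from the results already established.
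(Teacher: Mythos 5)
Your proposal follows essentially the same route the paper intends for this theorem: an anonymous-reconfiguration phase bounded via the Minimax Grid Matching Theorem (the analogue of Proposition~\ref{p:phase:1} with two overlaid matchings instead of three), followed by \rta with each shuffle simulated by the linear merge primitive of Lemma~\ref{l:lm} at cost $m+o(m)$ per round, and finally a comparison against Proposition~\ref{p:makespan-lower}; the paper itself only sketches this (deferring details to the extended version), and your accounting of $m_2 + m_1 + m_2 + o(m_1)$ and your identification of the cell-to-strip interface steps as the delicate point are both correct. No gaps beyond the interface lemmas you already flag, which indeed resolve in $O(1)$ steps per transition.
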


\section{Simulation Experiments}\label{sec:eval}
In  this  section,  we  evaluate Rubik Table based algorithms and compare them with fast and near-optimal solvers, \ecbs($w$=1.5) \cite{barer2014suboptimal} and \ddm \cite{han2020ddm}. 
These two methods are, to our knowledge, two of the fastest near-optimal solvers for \mpp.
We considered a state-of-the-art polynomial algorithm, push- and-swap \cite{luna2011push}, which gave fairly suboptimal results; the makespan optimality ratio is often above 100 for densities we examine. 
We also tested prioritized methods, e.g., \cite{ma2019searching,Okumura2019PriorityIW,silver2005cooperative}, which faced significant difficulties in resolving deadlocks. Given the limited relevance and considering the amount of results we are presenting, we omit these methods in our comparison.

All experiments are performed on an Intel\textsuperscript{\textregistered} Core\textsuperscript{TM} i7-9700 CPU at 3.0GHz. Each data point is an average over 20 runs on randomly generated instances, unless otherwise stated.
A running time limit of $300$ seconds is imposed over all instances. 
The optimality ratio is estimated as compared to conservatively estimated makespan lower bounds.
%
% All the algorithms are implemented in C++.
The Rubik Table based algorithms are implemented in Python. The compared solvers are C++ based.
%C\nolinebreak[4]\hspace{-.05em}\raisebox{.4ex}{\small\bf ++}.
As such, one can expect additional significant running time reductions from our algorithms implemented in C++. 
We choose Gurobi \cite{gurobi} as the mixed integer programming solver and  ORtools \cite{ortools} as the max-flow solver. 
The video of the simulations  can   be   found  at \href{https://youtu.be/aphCjWFwfss}{https://youtu.be/aphCjWFwfss}.

\subsection{Optimality of \rtmapf, \rtm, and \rth}
We first evaluate the optimality achieved by \rtmapf, \rtm, and \rth over randomly generated instances on their maximum designed robot density. That is, for \rtmapf, the grid is fully occupied; for \rtm and \rth, the robot density is $\frac{1}{2}$ and $\frac{1}{3}$, respectively. We test over three $m_1:m_2$ ratios: $1:1$, $3:2$, and $5:1$. The result is plotted in Fig.~\ref{fig:RTM-RTLM-RTH}. Computation time is not listed (because we list the computation time later for \rth; the running times of \rtmapf, \rtm, and \rth are similar). The optimality ratio is computed as the ratio between the solution makespan and the longest Manhattan distance between any pair of start and goal. Therefore, the estimate is an overestimate (i.e., the actual ratio may be lower/better). 
\begin{figure}[htbp]
\vspace{-1.5mm}
        \centering
         \includegraphics[width=1\linewidth]{./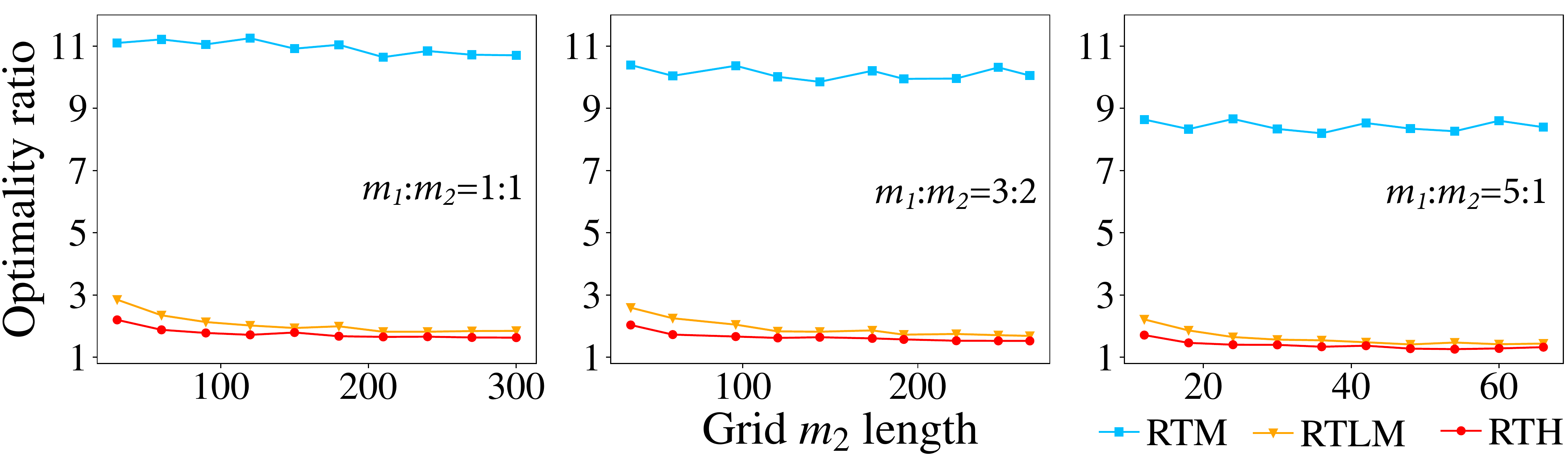}
% \vspace{-1.5mm}
\caption{Makespan optimality ratio for \rtmapf, \rtm, and \rth for their maximum designed robot density, for different grid sizes and $m_1:m_2$ ratios. We note that the largest problem has $90,000$ robots on a $300 \times 300$ grid.} 
        \label{fig:RTM-RTLM-RTH}
\vspace{-1.5mm}
    \end{figure}

We observe that \rtmapf achieves $7$--$10.5+$ makespan optimality ratio, which justifies the correctness of Theorem~\ref{t:rtmapf}.   
% We observe that \rtmapf actually achieves significantly better performance than the $7$--$10.5+$ guarantee by Theorem~\ref{t:rtmapf}, because Lemma~\ref{l:fast-line-shuffle} is a worst case bound. 
Both \rtm and \rth achieve sub-2 optimality guarantee for most of the test cases, with result for \rth dropping below $1.5$ on large grids. For all settings, as the grid size increases, there is a general trend of improvement of optimality across all methods/grid aspect ratios. This is due to two reasons: (1) the overhead in the shuffle operations becomes relatively smaller as grid size increases, and (2) with more robots, the makespan lower bound becomes closer to $m_1 + m_2$. Lastly, as $m_1:m_2$ ratio increases, the optimality ratio improves as predicted. For many test cases, the optimality ratio for the \rth for $m_1:m_2=5$ setting is around $1.3$.

We note that the performance of \rtm and \rth on optimality can be further improved using the heuristics described in Sec.~\ref{subsec:heuristics}. For the rest of the evaluations, we focus on \rth and its variants with additional heuristics. 

\subsection{Evaluation and Comparative Study of \rth}
\subsubsection{Impact of grid size}
For our first detailed comparative study of the performance of \rth, we set $m_1:m_2 = 3:2$ and fix robot density at $\frac{1}{3}$. In Fig.~\ref{fig:grid-size}, we compare the performance of ECBS\cite{barer2014suboptimal}, DDM\cite{han2020ddm}, \rth, \rthip, and \rthlba, in terms of computation time and optimality ratio. 

    \begin{figure}[htbp]
\vspace{-2mm}    
        \centering
        \includegraphics[width=1\linewidth]{./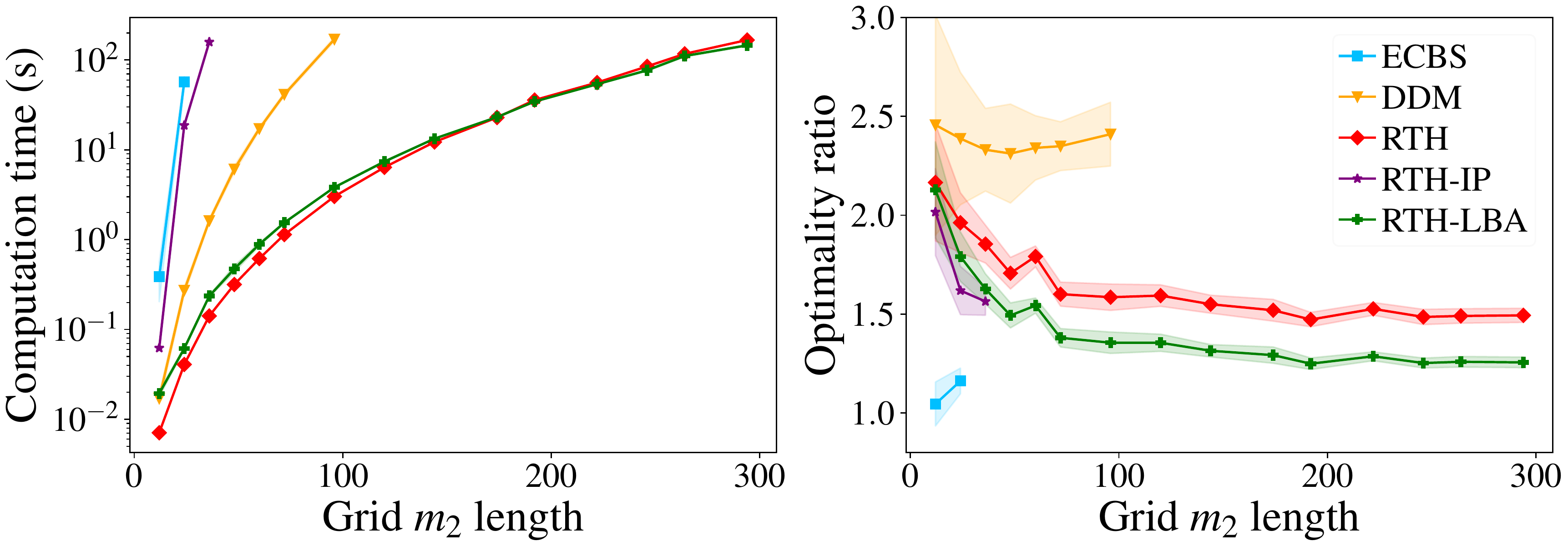}
        \vspace{-6mm}
        \caption{Computation time and optimality ratios on $m_1 \times m_2$ grids of varying sizes with $m_1:m_2 = 3:2$ and robot density at $\frac{1}{3}$.  One standard deviation is shown as shaded regions.} 
        \label{fig:grid-size}
        \vspace{-3mm}
\vspace{-1.5mm}        
    \end{figure}
    
Despite the less efficient Python-based implementation, \rtmapf, \rth, and \rthlba are faster than \ecbs and \ddm, due to their low polynomial running time guarantees. 
\rth and \rthlba can solve very large instances, e.g., on $450\times 300$ grids with $45,000$ robots in about $100$ seconds while neither \ecbs nor \ddm can.
\ecbs stopped working after $m_2 = 30$, though it shows better optimality on problems it can solve.
\ddm could handle up to $m_2 = 90$ but demonstrated poor optimality under the $\frac{1}{3}$ density setting.
%\textcolor{red}{$\leftarrow$correct?} correct
%
The optimality ratio of \rth and \rthlba improves as the graph size increases, as predicted by our theoretical analysis.
The optimality ratios of \rth and \rthlba reach as low as $1.49$ and $1.26$, respectively, agreeing with the ratio predicted by  Theorem~\ref{t:rth-ratio}; here, the high probability asymptotic ratio is $1 + \frac{m_2}{m_1+m_2} = 1.4$. \rthlba is able to do better than $1.4$ because of the LBA heuristic. 
\rthip does slightly better on optimality in comparison to \rth and \rthlba, but its scalability is limited. 
On the other hand, \rthlba does nearly as well on optimality and remains competitive in terms of running time in comparison to \rth. 
As a consequence, we do not include further evaluation of \rthip.

For each method, the one standard deviation range is also shown in the figure. Because \rth and \rthlba are mostly deterministic and there are many robots, the change in optimality across different instances is small. We omit the inclusion of standard deviations from other plots as they are mostly similar in other tested settings. 

We mention that we also evaluated \ecbs with temporal splitting heuristics \cite{guo2021spatial}, which did not show significant difference in comparison to \ecbs at the density we tried; so we did not include it here. We also evaluated push-and-swap \cite{luna2011push}, which runs fast but yields very poor optimality ratios ($>100$ for many instances). We further evaluated prioritized methods, e.g., \cite{Okumura2019PriorityIW}, which faced significant difficulties in resolving deadlocks. Given these, we did not include results from these methods in our comparatively study.

\subsubsection{Impact of robot density}
Next, we experiment the impact of different robot density on a $180\times 120$ grid (Fig. \ref{fig: density}). At density $\frac{1}{3}$, there are up to $7,200$ robots. 
%The starts and goals are randomly generated.
For densities below $\frac{1}{3}$, we add ``virtual robots'' when the matching step is performed. 
\ecbs does not appear because it cannot solve problems at this scale within $300$ seconds. 
    \begin{figure}[htbp]
        \centering
        \vspace{-2mm}
        \includegraphics[width=1\linewidth]{./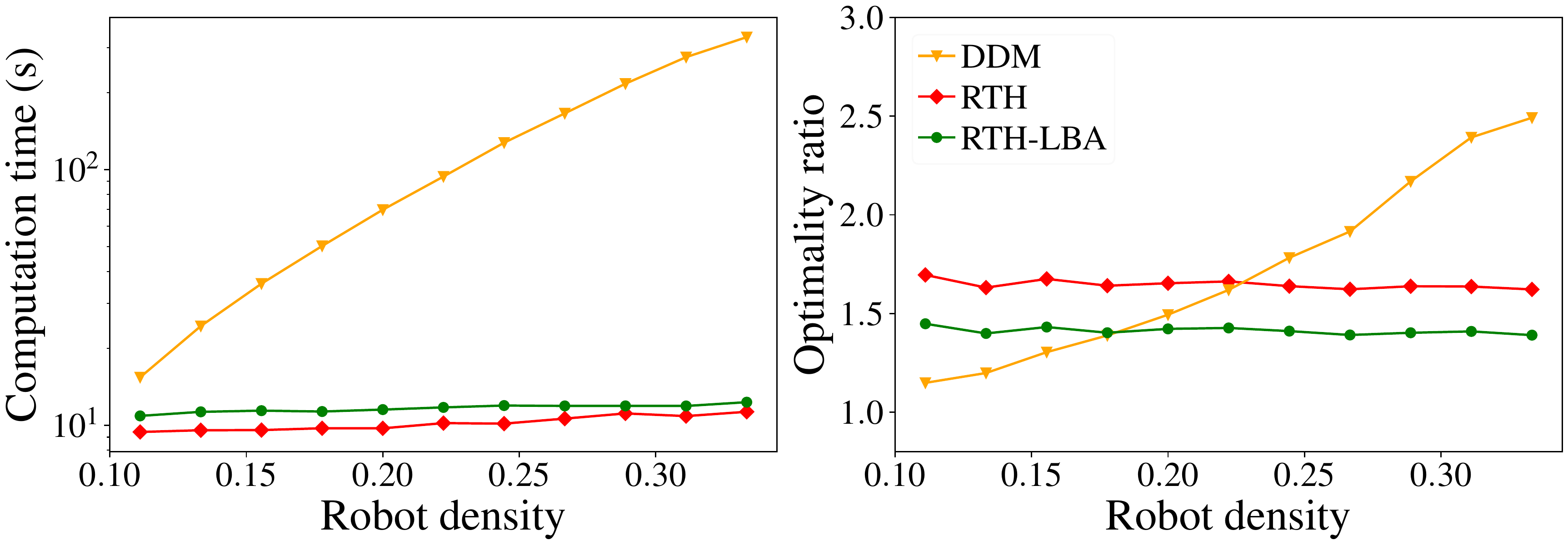}
        \vspace{-5mm}
        \caption{Computation time and optimality ratios on $180\times 120$ grids with varying robot density.} 
        \label{fig: density}
        \vspace{-3mm}
    \end{figure}

\rth and \rthlba both return solutions around $10$s on this graph for all instances. 
Both computation time and optimality ratio of \ddm grow as the robot density increases, while the robot density has little impact on \rth and \rthlba. 
At lower density, \ddm demonstrates better optimality. 
In environments with high densities, however, robots are highly coupled which causes more conflicts for structurally-agnostic approaches like \ddm (and \ecbs).
\rth and \rthlba show improved optimality as the density increases, reaching $1.4$, mostly due to the makespan of the instances getting larger.

\subsubsection{Handling obstacles}
\rth can also handle scattered obstacles and are especially suitable for cases where obstacles are regularly distributed. 
For instance, problems with underlying graphs like that in Fig. \ref{fig:jd_center}(b), where each $3 \times 3$ cell has a hole in the middle, can be natively solved without performance degradation.
Such settings find real-world applications in parcel sorting facilities in large warehouses \cite{wan2018lifelong,li2020lifelong}.
For this parcel sorting setup, we fix the robot density at $\frac{2}{9}$ and test \ecbs, \ddm, \rth and \rthlba on graphs with varying sizes. 
The results are shown in Fig \ref{fig: sorting_random}.
Note that \ddm can only apply when there is no narrow passage. So we added additional ``borders'' to the map to make it solvable for \ddm.
The results are similar as earlier ones; \rth and \rthlba run very fast and produce high-quality solutions, with conservatively estimated optimality ratio approaching 1.27.% \r{$1.4$}.
   \begin{figure}[htbp]
        \centering
        \includegraphics[width=1\linewidth]{./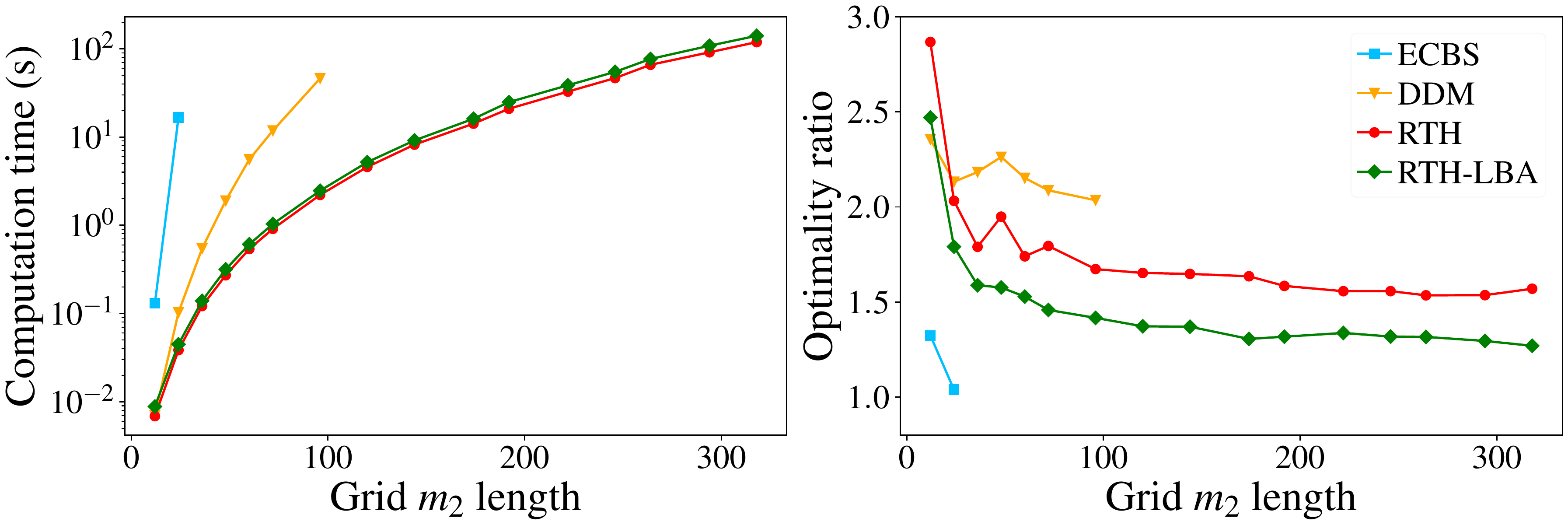}
        \vspace{-4mm}
        \caption{Computation time and optimality ratios on environments  of varying sizes with regularly distributed obstacles at $\frac{1}{9}$ density and robots at $\frac{2}{9}$ density. $m_1:m_2 = 3:2$.
        %Both of start and goal configuration are uniformly randomly generated.
        } 
        \label{fig: sorting_random}
        \vspace{-2mm}
    \end{figure}

\subsubsection{Impact of grid aspect ratios}
In this section, we fix $m_1m_2=90000$ and vary the $m_2:m_1$ ratio between $0$ (nearly one dimensional) and $1$ (square grids). We evaluated four algorithms, two of which are \rth and \rthlba. Now recall that \rtp on an $m_1 \times m_2$ table can also be solved using $2m_2$ column shuffles and $m_1$ row shuffles. Adapting \rth and \rthlba with $m_1 + 2m_2$ shuffles gives the other two variants which we denote as \rth-LL and \rthlba-LL respectively, with ``LL'' suggesting two sets of longer shuffles are performed (each set of column shuffle work with columns of length $m_1$). The result is summarized in Fig.~\ref{fig:rectangle}.
  \begin{figure}[htbp]
        \centering
        \includegraphics[width=1\linewidth]{./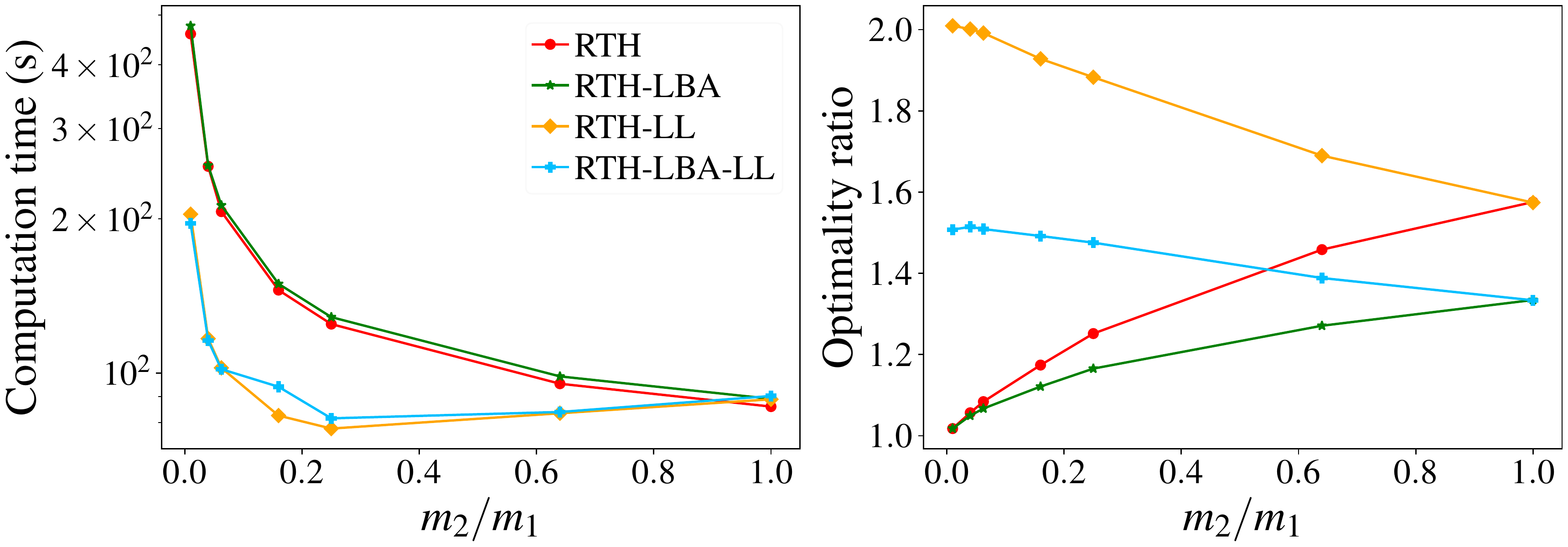}
        \vspace*{-5mm}
        \caption{Computation time and optimality ratios on rectangular grids of varying aspect ratio and $\frac{1}{3}$ robot density.} 
        \label{fig:rectangle} 
        \vspace*{-4mm}
    \end{figure}
    
Interestingly but not surprisingly, the result clearly demonstrates the trade-offs between computation effort and solution optimality. \rth and \rthlba achieve better optimality ratio in comparison to \rth-LL and \rthlba-LL but require more computation time.
Notably, the optimality ratios for \rth and \rthlba are very close to 1 when $m_2:m_1$ is close to 0.
Because the LBA heuristic aims to reduce the possible makespan of the first or third phase of \rta, as $m_1/m_2$ increases, the optimality gap between LBA and non-LBA variants of \rth increases, which clearly demonstrates the advantage of the LBA heuristic. 
\subsection{Special Patterns}
Besides random start and goal settings, we also test \rthlba on many ``special'' instances; two are presented here (Fig.~\ref{fig:new_test}). 
For both settings, $m_1 = m_2$. 
In the first, the ``squares'' setting, robots form concentric square rings and each robot and its goal are centrosymmetric. 
%In the square setting (or other names), robots form into multiple squares and they need to move to the opposite side of the ring . 
In the second, the ``blocks'' setting, the grid is divided into smaller square blocks (not necessarily $3 \times 3$) containing the same number of robots. robots from one block need to move to another random chosen block.
\rthlba achieves optimality that is fairly close to 1.0 in the square setting and 1.7 in the block setting. The computation time is similar to that of Fig.~\ref{fig: sorting_random}; ECBS does well on optimality but scales poorly (only works on $30\times 30$ grids). For some reason, DDM does very poorly on optimality and is not included. 
    \begin{figure}[htbp]
        \centering
        \includegraphics[width=1\linewidth]{./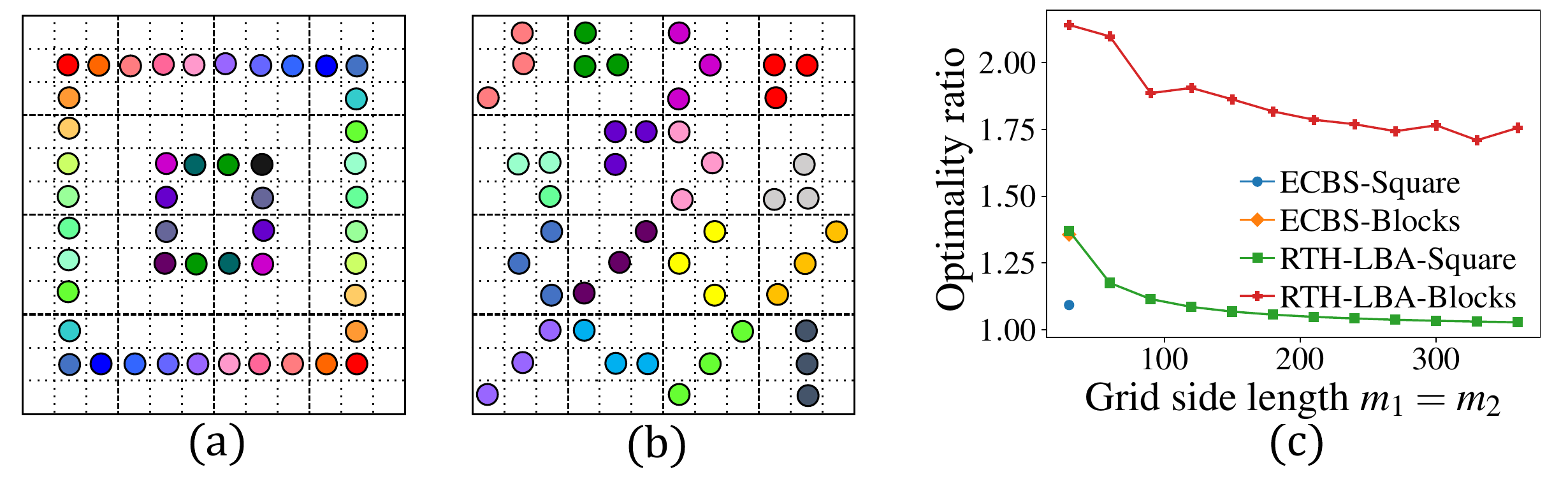}
        \vspace*{-3mm}
        \caption{(a) An illustration of the ``squares'' setting. (b) An illustration of the ``blocks'' setting. (c) Optimality ratios for the two settings for ECBS and \rthlba.} %\r{Change "Graph size ($m$)" to "Grid side length ($m_1 = m_2$)"}} 
        \label{fig:new_test} 
        \vspace*{-3mm}
    \end{figure}

\section{Conclusion and Discussion}\label{sec:conclusion}
In this study, we propose to apply Rubik Tables \cite{szegedy2020rearrangement} to solving \mpp. 
A basic adaptation of \rta, with a more efficient line shuffle routine, enables solving \mpp on grids at maximum robot density, in polynomial time, with previously unachievable optimality guarantee. 
Then, combining \rta, a highway heuristic, and additional matching heuristics, we obtain novel polynomial time algorithms that are provably asymptotically $1 + \frac{m_2}{m_1+m_2}$ makespan-optimal on $m_1\times m_2$ grids with up to $\frac{1}{3}$ robot density, with high probability. 
Similar guarantees are also achieved  with the presence of obstacles and at robot density up to one half. 
In practice, our methods can solve problems on graphs with over $10^5$ number of vertices and $4.5 \times 10^4$ robots to $1.26$ makespan-optimal (which can be better with larger $m_1:m_2$ ratio).
To our knowledge, no previous \mpp solvers provide dual guarantees on low-polynomial running time and practical optimality. 

Our study opens the door for many follow up research directions; we discuss a few here. 

\textbf{New line shuffle routines}. Currently, 
\rtm and \rth only use two/three rows to perform a simulated row shuffle.
Among other restrictions, this requires that the sub-grids used for performing simulated shuffle be well-connected (i.e. obstacle-free or the obstacles are regularly spaced so that there are at least two rows that are not blocked by static obstacles in each motion primitive to simulate the shuffle).
Using more rows or even irregular rows in a simulated row shuffle, it is potentially possible to accommodate larger obstacles and/or support density higher than one half.

\textbf{Better optimality at lower robot density}. 
It is interesting to examine whether further optimality gains can be realized at lower robot density settings, e.g., $\frac{1}{9}$ density or even 
lower, which are still highly practical. We hypothesize that this can be realized by somehow merging the different phases of \rta so that some unnecessary robot travel can be eliminated, after computing an initial plan. 

\textbf{Consideration of more realistic robot models}.
The current study assumes a unit-cost model in which a robot takes a unit amount of time to travel a unit distance and allow turning at every integer time step. 
In practice, robots will need to accelerate/decelerate and also need to make turns. 
Turning can be especially problematic and can cause significant increase in plan execution time, if the original plan is computed using the unit-cost model mentioned above. 
We note that \rth returns solutions where robots move in straight lines most of the time, which is advantageous in comparison to all existing \mpp algorithms, such as ECBS and DDM, which have large number of directional changes in their computed plans. 
it would be interesting to see whether the performance of \rta based \mpp algorithms will further improve as more realistic robot models are adapted. 
%Considering these extra costs introduced by non-holonomic rotations would be interesting and we believe \rth  would achieve good performance in terms of scalibility and optimality in this case.

\textbf{Life-long \mpp settings}.
Currently, out \rta based \mpp solves are limited to a static setting whereas e-commerce applications of multi-robot motion planning often require solving life-long setting  \cite{Ma2017LifelongMP}. 
%\textcolor{red}{$\leftarrow$ add citation}. 
%
The metric for evaluating life-long \mpp is often the \emph{throughput}, namely the 
number of goals reached per time step.
We note that \rth also provides optimality guarantees for such settings, e.g., for the setting where $m_1 = m_2 = m$, we have 

\begin{proposition}[Bound for Random Life-Long \mpp]\label{p:life-long}
The direct application of \rth to large-scale life-long \mpp on square grids yields an optimality ratio of $\frac{2}{9}$ on throughput.
\end{proposition}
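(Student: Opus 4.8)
The plan is to reduce the life-long throughput bound to the one-shot makespan bound already established for \rth, by decomposing the life-long problem into a sequence of one-shot batch instances. The key realization is that in a square grid with $m_1 = m_2 = m$ at $\frac{1}{3}$ robot density, \rth solves a random one-shot instance with makespan $m_1 + 2m_2 + o(m_1) = 3m + o(m)$, while the information-theoretic lower bound on makespan is $m_1 + m_2 - o(m_1) = 2m - o(m)$ (Proposition~\ref{p:makespan-lower}). So each ``shuffle-all-robots-to-new-goals'' batch takes $3m + o(m)$ steps and delivers $\frac{m^2}{3}$ robots to their goals.

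\textbf{First} I would fix the life-long protocol: operate in rounds, where in each round every robot is assigned its next pending goal (goals arriving uniformly at random, consistent with the ``random instance'' hypothesis), and \rth is invoked to route the batch of $n = \frac{m^2}{3}$ robots from their current configuration to the new goal configuration. Since the robots end each round in a valid (in fact balanced, and reconfigurable) configuration, the next round's instance is again a random \mpp instance to which Theorem~\ref{t:rtm-ramdom} applies; thus each round completes $\frac{m^2}{3}$ goal-reaching events in $3m + o(m)$ time steps, with high probability. This gives an achieved throughput of $\frac{m^2/3}{3m + o(m)} = \frac{m}{9} + o(m)$ goals per step.

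\textbf{Next} I would establish the matching upper bound on achievable throughput for any algorithm. The idea: consider any window of $L$ consecutive time steps. The total number of goal-reaching events in that window is at most the number of robots times $L$ divided by the minimum time a robot needs between consecutive goal completions — but more sharply, with high probability some robot's start and goal in a random batch are at Manhattan distance $\ge 2m - o(m)$ (the argument of Proposition~\ref{p:makespan-lower} applied to corner sub-grids), and averaging over all $\frac{m^2}{3}$ robots, the expected distance between a random start and random goal on an $m\times m$ grid is $\frac{2m}{3} + o(m)$. So the sum of path lengths needed to deliver $\frac{m^2}{3}$ goals is $\Omega(\frac{m^3}{9})$, and since at most $\frac{m^2}{3}$ robots move one edge per step, delivering $\frac{m^2}{3}$ goals takes $\Omega(m)$ steps on average; more carefully, throughput is bounded above by $\frac{\#\text{robots}}{\text{avg dist to next goal}} \le \frac{m^2/3}{2m/3} = \frac{m}{2}$ in the idealized no-congestion case. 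Comparing the achieved $\frac{m}{9}$ against the optimal (which is $\Theta(m)$, and in the best case $\frac{m}{2}$ from the distance bound), the ratio is $\frac{m/9}{m/2} = \frac{2}{9}$.

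\textbf{The main obstacle} I expect is pinning down precisely what ``optimal throughput'' means and showing the upper bound on it is genuinely $\frac{9}{2}$ times the achieved value rather than merely $\Theta$-comparable — i.e., getting the constant $\frac{2}{9}$ exactly rather than just $\Omega(1)$. This requires (i) a clean argument that no life-long algorithm can beat throughput $\frac{m}{2} + o(m)$ — which should follow from the flow/capacity argument that each of the $\frac{m^2}{3}$ robots must traverse on average $\frac{2m}{3} - o(m)$ edges per delivered goal (expected Manhattan distance between independent uniform points) combined with the per-step movement budget of $\frac{m^2}{3}$ edge-traversals total — and (ii) verifying that the round-based decomposition does not lose throughput by forcing synchronization (a robot finishing early idles until the round ends). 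Point (ii) is handled by noting the makespan bound $3m + o(m)$ is the bottleneck over all robots in the batch and the lower-order slack is absorbed into $o(m)$; point (i) is the part requiring the careful probabilistic estimate, but it is a routine consequence of linearity of expectation over the corner-decomposition already used in Proposition~\ref{p:makespan-lower} together with concentration (e.g.\ a second-moment or Azuma bound) to upgrade ``in expectation'' to ``with high probability.''
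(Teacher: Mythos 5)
Your proposal matches the paper's own argument essentially step for step: decompose the life-long problem into batches solved by \rth, charge each batch $3m + o(m)$ makespan to deliver $n = \frac{m^2}{3}$ goals for an achieved throughput of $\frac{n}{3m}$, and compare against the idealized collision-free throughput $\frac{3n}{2m}$ derived from the expected Manhattan distance $\frac{2m}{3}$ between uniform random points on an $m \times m$ grid, yielding the ratio $\frac{2}{9}$. The paper's proof is in fact less detailed than yours --- it does not address the synchronization slack or the concentration of the distance bound that you flag as remaining work --- so your version is, if anything, the more careful rendering of the same idea.
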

\ifarxiv
\begin{proof}
We may solve life-long \mpp using \rth in \emph{batches}. 
%Generally, each robot reaches its goal only in the last phase.
%The throughput of \rth is $\mathcal{T}_{RTH}=\frac{n}{(3m+o(m))}=\frac{n}{3m}$.
For each batch with $n$ robots, \rth takes about $3m$ steps; the throughput 
is then $\mathcal{T}_{RTH} = \frac{n}{3m}$.
As for the lower bound estimation of the throughput, the expected Manhattan 
distance in an $m\times m$ square, ignoring inter-robot collisions, is 
$\frac{2m}{3}$.
Therefore, the lower bound throughput for each batch is $\mathcal{T}_{lb}=\frac{3n}{2m}$.
The asymptotic optimality ratio is $\frac{\mathcal{T}_{RTH}}{\mathcal{T}_{lb}}=\frac{2}{9}$.
\end{proof}
\fi

The $\frac{2}{9}$ estimate is fairly conservative because \rth supports much higher 
robot densities not supported by known life-long \mpp solvers. 
Therefore, it appears very promising to develop optimized Rubik Table inspired 
algorithms for solving life-long \mpp problems. 

\section*{Acknowledgments}\label{sec:ack}
This work is supported in part by NSF awards IIS-1845888, CCF-1934924, and IIS-2132972, and an Amazon Research Award. We sincerely thank the anonymous reviewers for their insightful comments and suggestions.

%% Use plainnat to work nicely with natbib. 

\bibliographystyle{plainnat}
%\bibliography{references}
\bibliography{all}
\begin{comment}
\clearpage
\pagebreak
\section*{Appendix: Proofs}

\end{comment}

\end{document}